\def\eqref#1{equation~\ref{#1}}
\def\1{\bm{1}}
\DeclareMathAlphabet{\mathsfit}{\encodingdefault}{\sfdefault}{m}{sl}
\SetMathAlphabet{\mathsfit}{bold}{\encodingdefault}{\sfdefault}{bx}{n}
\newcommand{\R}{\mathbb{R}}
\definecolor{blue1}{HTML}{566f32}
\definecolor{green2}{HTML}{BFF6BA}
\newtheorem{remark}{Remark}
\newtheorem{proposition}{Proposition}
\newtheorem{proposition*}{Proposition}
\newtheorem{corollary}{Corollary}
\newtheorem*{corollary*}{Corollary}
  \newtheorem*{rep@proposition}{\rep@title}
  \newcommand{\newrepproposition}[2]{%
    \newenvironment{rep#1}[1]{%
      \def\rep@title{#2~\ref*{##1}}%
      \begin{rep@#1}%
        \phantomsection
    }{%
      \end{rep@#1}%
    }%
  }
\newtheorem*{rep@corollary}{\rep@title}
\newcommand{\newrepcorollary}[2]{%
  \newenvironment{rep#1}[1]{%
    \def\rep@title{#2~\ref*{##1}}%
    \begin{rep@#1}
    \phantomsection
    }%
    {\end{rep@#1}}}
\definecolor{Gray}{gray}{0.94}
\newcolumntype{a}{>{\columncolor{Gray}}c}
\newcolumntype{Y}{>{\centering\arraybackslash}X}
\newlength\savewidth
\renewcommand \thepart{}
\renewcommand \partname{}
\definecolor{darkgreen}{rgb}{0.0, 0.5, 0.0}
\icmltitlerunning{Flipping Against All Odds: Reducing LLM Coin Flip Bias via Verbalized Rejection Sampling}
\begin{document}
\doparttoc %
\faketableofcontents %

\twocolumn[
  \icmltitle{Flipping Against All Odds: Reducing LLM Coin Flip Bias\\via Verbalized Rejection Sampling}

  \icmlsetsymbol{equal}{*}

  \begin{icmlauthorlist}
    \icmlauthor{Tim Z. Xiao}{a,b}
    \icmlauthor{Johannes Zenn}{a,b}
    \icmlauthor{Zhen Liu}{c}
    \icmlauthor{Weiyang Liu}{b,d}
    \icmlauthor{Robert Bamler}{a}
    \icmlauthor{Bernhard Schölkopf}{b}
  \end{icmlauthorlist}

  \icmlaffiliation{a}{University of T\"ubingen}
  \icmlaffiliation{b}{Max Planck Institute for Intelligent Systems, T\"ubingen}
  \icmlaffiliation{c}{The Chinese University of Hong Kong, Shenzhen}
  \icmlaffiliation{d}{The Chinese University of Hong Kong}

  \icmlcorrespondingauthor{Tim Z. Xiao}{zhenzhong.xiao@uni-tuebingen.de}

  \icmlkeywords{Machine Learning, ICML}

  \vskip 0.3in
]

\printAffiliationsAndNotice{}  %

\begin{abstract}
Large language models (LLMs) can often accurately describe probability distributions using natural language, yet they still struggle to generate faithful samples from them. 
This mismatch limits their use in tasks requiring reliable stochasticity, such as Monte Carlo methods, agent-based simulations, and randomized decision-making.
We investigate this gap between knowledge and sampling in the context of Bernoulli distributions. 
We introduce Verbalized Rejection Sampling (VRS), a natural-language adaptation of classical rejection sampling that prompts the LLM to reason about and accept or reject proposed samples. 
Despite relying on the same Bernoulli mechanism internally, VRS substantially reduces sampling bias across models. 
We provide a theoretical analysis showing that, under mild assumptions, VRS improves over direct sampling, with gains attributable to both the algorithm and prompt design.
More broadly, our results show how classical probabilistic tools can be verbalized and embedded into LLM workflows to improve reliability, without requiring access to model internals or heavy prompt engineering.
\end{abstract}

\begin{figure*}[t]
    \centering
    \setlength{\abovecaptionskip}{4pt}
    \setlength{\belowcaptionskip}{-7pt}
    \includegraphics[width=0.9\linewidth]{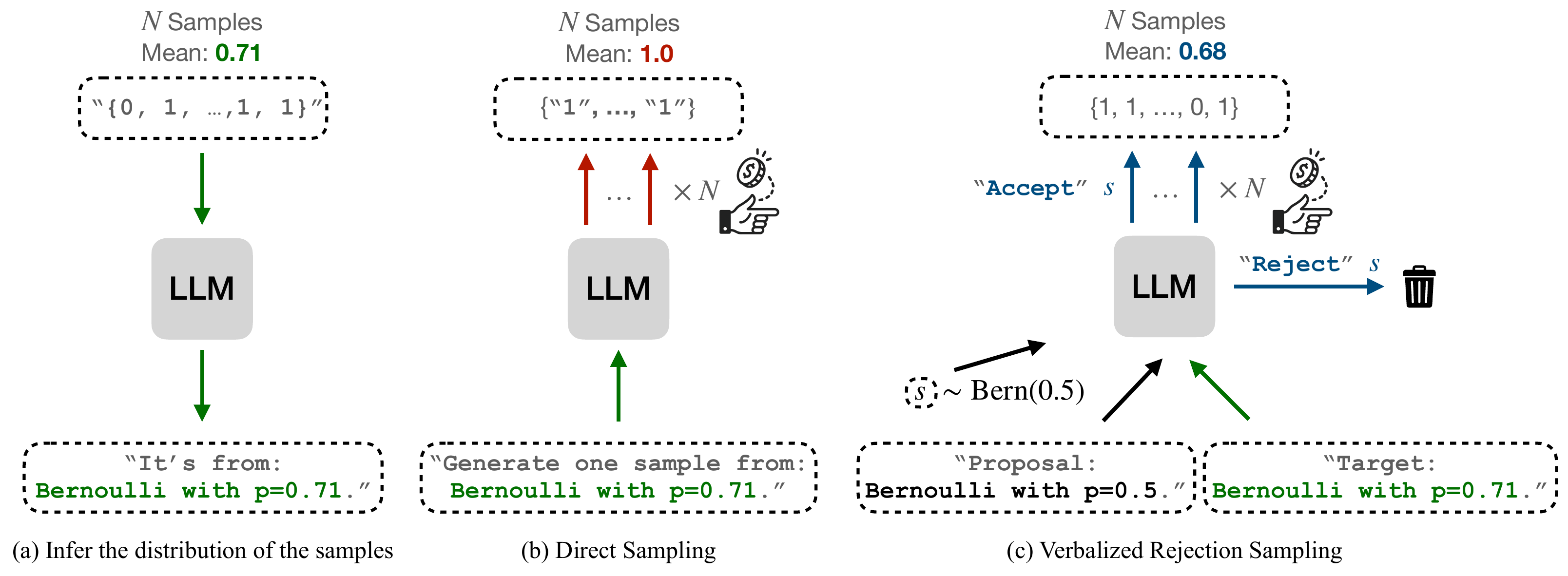}
    \caption{
    Illustrations of the knowledge-sampling gap and two different sampling methods.
    }
    \label{fig:illustration}
\end{figure*}

\section{Introduction} \label{sec:introduction}

Large language models (LLMs) have demonstrated remarkable capabilities in generating coherent text and even performing reasoning tasks. 
An emerging question is whether LLMs can understand and reproduce probabilistic processes when prompted in natural language. 
In particular, if we ask an LLM to behave like a random sampler for a distribution (e.g., produce coin flip outcomes with a given probability), will it faithfully do so? 
Reliable sampling underpins Monte Carlo algorithms~\citep{robert1999monte, xu2024application}, probabilistic programming~\citep{carpenter2017stan}, agent-based simulations~\citep{meister2024benchmarking,cao2025specializing}, and randomized decision making~\citep{wald1949statistical, vidler2025evaluating}; yet, despite randomness being central to modern computation, the extent to which contemporary LLMs can generate faithful i.i.d.\ samples remains largely unexplored.

Recent work has begun to study LLMs not just as next-word predictors but as generators of random outcomes drawn from specified distributions. 
Empirical evidence shows that, while LLMs can infer probability distributions~\citep{gu2024llms} and do Bayesian updates to approximately infer a coin's bias when given data~\citep{gupta2025enough}, their own samples from a distribution remain biased \citep{meister2024benchmarking}.
\Cref{fig:illustration}(a;b) illustrate this gap for Bernoulli distributions.
Hence, LLMs know what a fair coin is, but they struggle to behave like one.

This mismatch poses concrete risks from a user's perspective. 
A user who sees an LLM accurately reasoning about a distribution might trust it to sample from that distribution; yet hidden bias can contaminate downstream workflows, skew survey simulators, or introduce unfairness in stochastic tie-breakers. 
If an LLM cannot flip a fair coin, can it be trusted to sample from complex distributions?
This raises safety, reliability, and fairness concerns across the stack.

In the setting of Bernoulli distributions, we present a comprehensive study of correcting LLM sampling bias via a language-adapted rejection-sampling framework, and uncover surprising interactions between prompt design and algorithmic guarantees.
Our contributions include:

\begin{itemize}[leftmargin=*,nosep]
\setlength\itemsep{0.4em}
    \item \textbf{Sampling Faithfulness Study (\Cref{sec:direct_sampling}).} 
    We measure how faithfully LLMs generate i.i.d.\ Bernoulli samples when prompted directly. 
    Across four models, sampling bias varies significantly with the phrasing of the distribution. 
    \emph{Chain‑of‑thought does not guarantee improvement.}
    We also quantify the gap between a model's ability to identify a distribution and its ability to simulate it.
    
    \item \textbf{Verbalized Rejection Sampling (VRS) (\Cref{sec:vrs}).} 
    We adapt the classical rejection sampling method through natural language into LLMs.
    VRS is model‑agnostic (for both open-source and proprietary LLMs), requires no access to the model weights, and keeps the LLM in a black‑box. 
    Given a fixed prompt with textual descriptions of the target and proposal distributions alongside a candidate sample, the LLM is instructed to perform the accept/reject step.
    Our empirical study shows a significant reduction of the bias for the samples.
    
    \item \textbf{Empirical and Theoretical Insights (\Cref{sec:why}).} 
    Effectively, VRS draws a Bernoulli random variable to decide whether to accept a proposed sample. 
    Counter‑intuitively, this indirection produces less sampling bias than prompting the model to output a sample directly.
    We analyze this phenomenon theoretically, proving (under mild assumptions) that VRS can generate samples with less bias than direct sampling and separating the gains attributable to the prompt from those guaranteed by the algorithm itself.
\end{itemize}

Beyond correcting the specific failure mode of Bernoulli sampling, our study opens a broader path towards integrating principled randomness into LLM‑based systems. Faithful Bernoulli generation is a basic requirement for reliable LLM‑driven simulations and stochastic reasoning. 
Our results show that a lightweight, theoretically sound wrapper---without model access or hyper-parameter tuning---substantially narrows the knowledge-sampling gap. More broadly, our work illustrates how classical statistical tools can be verbalized and paired with LLMs to deliver reliability without resorting to opaque prompt engineering.

\section{Related Work} \label{sec:related-work}

\paragraph{Sampling and flipping coins with LLMs.}
Recent work shows that LLMs often exhibit a gap between knowing and sampling from a distribution.
For example, LLMs can describe the target probabilities, yet when asked to ``roll a die'' or ``flip a coin'' their outputs exhibit large bias. Incorporating code generation with Python tool use can alleviate the problem \citep{gu2024llms}. 
In contrast, we focus on improving sampling within the natural language space, leveraging LLMs' inherent probabilistic reasoning capabilities. While one could bypass the model to obtain true samples from a target distribution, enabling LLMs to faithfully perform such tasks themselves is both practically useful and scientifically insightful.
Also, when LLMs are asked to ``flip a fair coin'' and ``flip 20 fair coins'', they not only replicate human biases but often amplify them \citep{van2024random}.
Another work probes the online learning setting of Bernoulli distribution from a Bayesian inference angle \citep{gupta2025enough}, showing that with sufficient in-context examples, LLMs update their estimate of a coin's bias roughly following Bayes' rule.
Unlike their focus on online learning and belief updating, we do not assume sequential access to data and instead concentrate on the generation of i.i.d.\ samples from a fixed Bernoulli distribution.
Similar gaps exist in settings beyond Bernoulli (e.g., poll simulation, categorical distribution), showing that LLMs can summarize distributions but fail to sample from them reliably, echoing the Bernoulli findings on a higher-dimensional setup \citep{meister2024benchmarking,hopkins2023can}.
These studies reveal a recurring pattern: LLMs know the right distributions but struggle to sample from them faithfully. Our work aims to reduce this mismatch by adapting the rejection sampling algorithm to LLMs, leveraging their internal probabilistic behavior to guide natural language based sampling.

\begin{figure*}[!t]
    \centering
    \setlength{\abovecaptionskip}{4pt}
    \setlength{\belowcaptionskip}{-7pt}
    \includegraphics[width=0.9\linewidth]{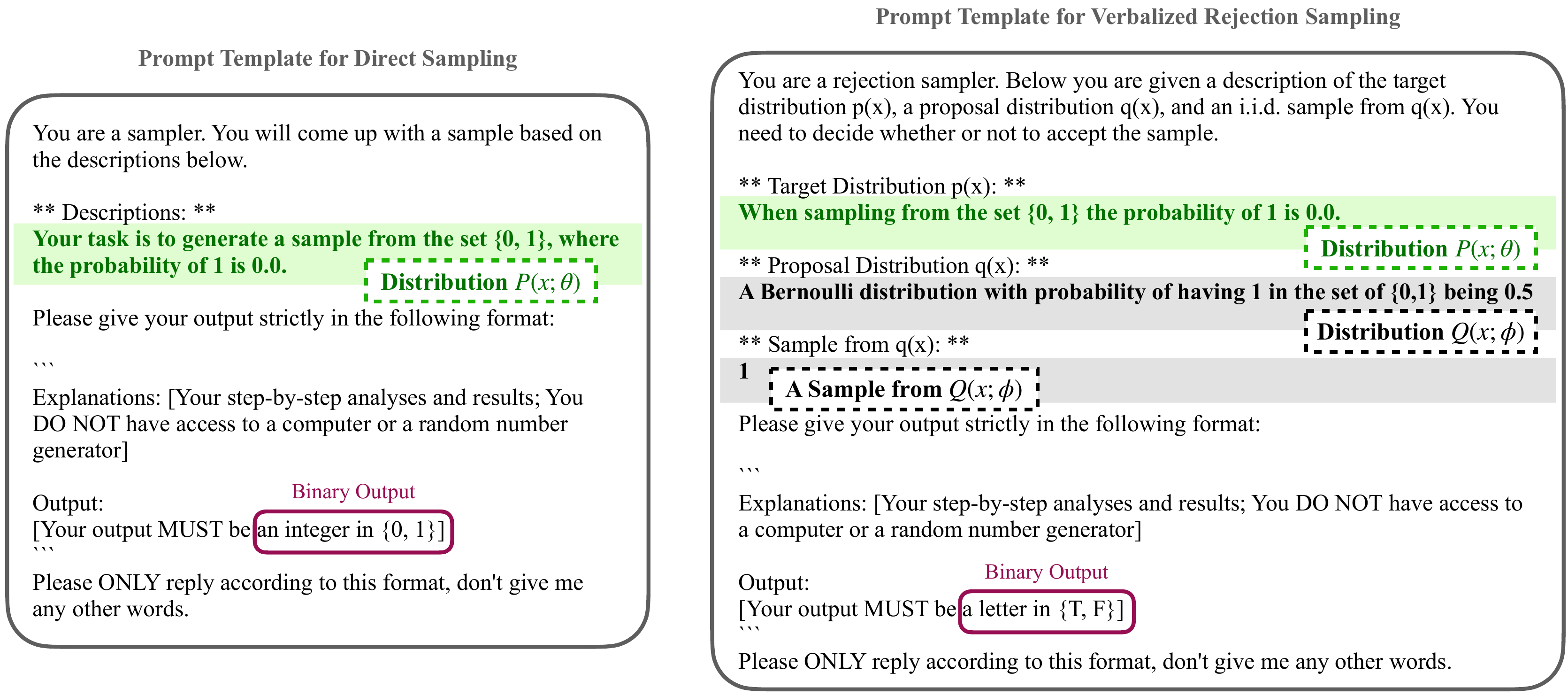}
    \caption{
    Prompt templates for direct sampling and Verbalized Rejection Sampling.
    }
    \label{fig:sampler_prompt_template}
\end{figure*}

\paragraph{Natural language and text based parameterization.}
Recent work explores using natural language to parameterize models, treating LLMs as inference engines that interpret and evaluate these descriptions. This makes model specification more accessible and interpretable.
LLM Processes~\citep{requeima2024llm}, where LLMs generate predictive distributions conditioned on natural language inputs and in-context data, operates in an in-context, non-parametric style and requires access to token logits. 
In contrast, we treat language as a parametric description of a fixed distribution, without past data or logit access.
In Verbalized Machine Learning (VML;~\citep{xiao2024verbalized}), prompts are treated as natural language parameters for deterministic functions. 
Our work instead focuses on probabilistic distributions and faithful sampling.
Additionally, new theoretical frameworks demonstrating that a finite set of function compositions, analogous to a vocabulary, can approximate any continuous mapping, drawing parallels between linguistic compositionality and function approximation~\citep{cai2023vocabulary}.
These studies underscore the potential of natural language as a medium for specifying probabilistic models. 
In our work, we focus on the Bernoulli distribution as a fundamental case study, demonstrating how LLMs can be guided to generate faithful samples from a simple yet foundational probabilistic model.

\section{Problem Setup} \label{sec:problem-setup}

Our investigation focuses on the ability of LLMs to generate faithful i.i.d.\ samples from distributions described purely in natural language.
Focusing on Bernoulli distributions, defined by a single numerical parameter $p\in [0,1]$, we treat LLMs as samplers accessed solely through text interaction.

\subsection{Parameterizing Distributions in Natural Language}
\label{sec:param}
In our setting, a distribution is parameterized by a textual prompt. Formally, we denote this natural language parameterized distribution as $P(x;\theta)$, where $\theta$ captures both the underlying numerical parameter $p$ and the linguistic phrasing of the prompt. \Cref{fig:sampler_prompt_template}(left) shows an example, where
\begin{adjustwidth}{-1.5em}{-1.5em} 
\begin{quote}
\vskip-0.25em
\centering 
 $P(x;\theta)$ = \emph{``Your task is to generate a sample from the set \{0, 1\}, where the probability of 1 is 0.0.''}.
\end{quote}
\end{adjustwidth}
\vskip-0.25em
For the same $p$, different phrasings may lead to different sampling behaviors.
We test several ways of phrasing a Bernoulli distribution, %
and write $P(x;p)$ for a fix phrasing.
For each phrasing, we test $101$ values of $p\in \{0.0, 0.01, \ldots, 1.0\}$.
For each $p$, we query the LLM $100$ times independently with the same prompt, and extract the binary output (i.e., `$0$' or `$1$') to form i.i.d. samples.

\subsection{LLMs as Black-Box Samplers} \label{sec:llms-black-box-samplers}

We treat LLMs as black-box samplers, accessed solely via APIs. The only controllable input is the prompt; the only observable output is text. For open-source models, we use vLLM~\citep{kwon2023efficient}, but we assume no access to internals such as weights, activations, or token-level logits. This contrasts with prior work~\citep{gupta2025enough,hopkins2023can,requeima2024llm} that uses output token logits to estimate sampling probabilities.

This API-only setup allows consistent evaluation across both open-source and proprietary models, reflecting realistic usage where internals are inaccessible. It supports techniques like chain-of-thought (CoT;~\citep{wei2022chain}) prompting, which can distort token-level probabilities by conditioning on generated reasoning: with CoT, logits reflect $p(x\,|\text{ reasoning for } x)$ instead of the intended $p(x)$.
We fix all decoding hyperparameters (e.g., temperature, top-k) to their default values given in the API, since most users do not adjust them, and often do not have the ability to do so.

\section{How Reliable is Direct Sampling?}\label{sec:direct_sampling}
This section examines the reliability of direct sampling from LLMs. We first compare their ability to generate samples to their ability to recognize distributions, then explore how prompt phrasing affects sampling bias, and finally test whether CoT reasoning improves sample quality.

\subsection{Measuring the Knowledge-Sampling Gap}\label{sec:the_gap}

To assess the gap between an LLM's understanding of a Bernoulli distribution and its ability to sample from it, we compare its evaluative and generative performance in a controlled setup, using Llama-3.1-70B-Instruct~\citep{grattafiori2024llama}.
We first test the model's ability to identify the correct Bernoulli distribution from data. For 11 equally spaced probabilities ${p_0, ..., p_{10}}$, s.t. $p_i\in[0,1]$, we generate 100 i.i.d.\ samples using Python, forming datasets $S_i$.
For each pair $(i,j)$, we prompt the LLM to decide whether $S_i$ was drawn from $\mathrm{Bern}(p_j)$, producing an $11\times 11$ response matrix.
Diagonal entries should be ``Yes'', off-diagonals ``No''. We repeat this process five times and report average accuracies in \Cref{fig:mismatch}(a).
We then test the model's sampling behavior by prompting it to generate 100 samples for each $p_i$, using the template in \Cref{fig:sampler_prompt_template}(left).
The resulting sets $\hat{S}_i$ are evaluated using the same method as before.
The average accuracies over five runs are reported in \Cref{fig:mismatch}(b).

\begin{figure}
    \centering
    \includegraphics[width=0.9\linewidth]{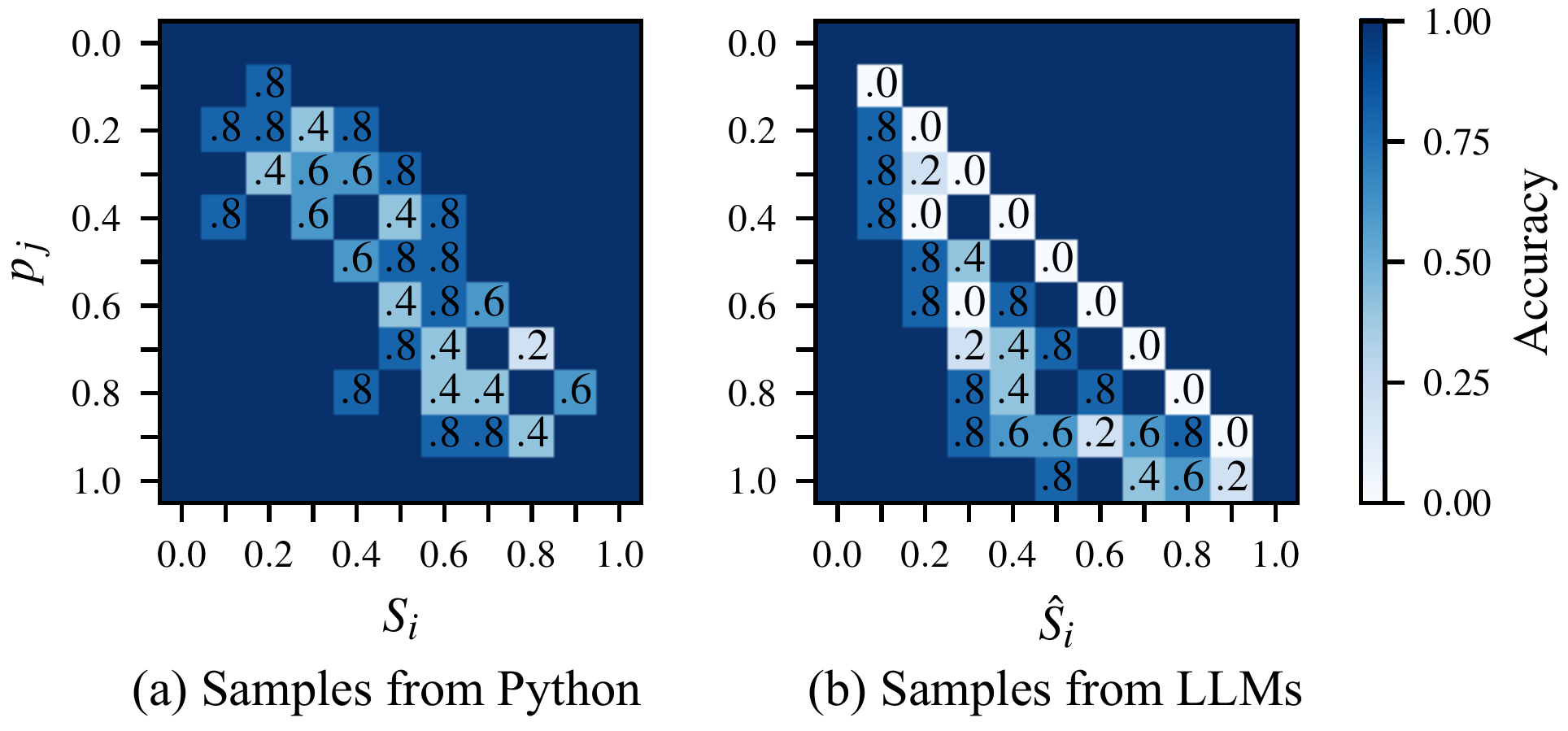}
    \caption{Recognition accuracy matrix.}
    \label{fig:mismatch}
    \vskip-2em
\end{figure}

\begin{figure*}[!b]
    \centering
    \setlength{\abovecaptionskip}{4pt}
    \setlength{\belowcaptionskip}{-7pt}
    \includegraphics[width=0.9\linewidth]{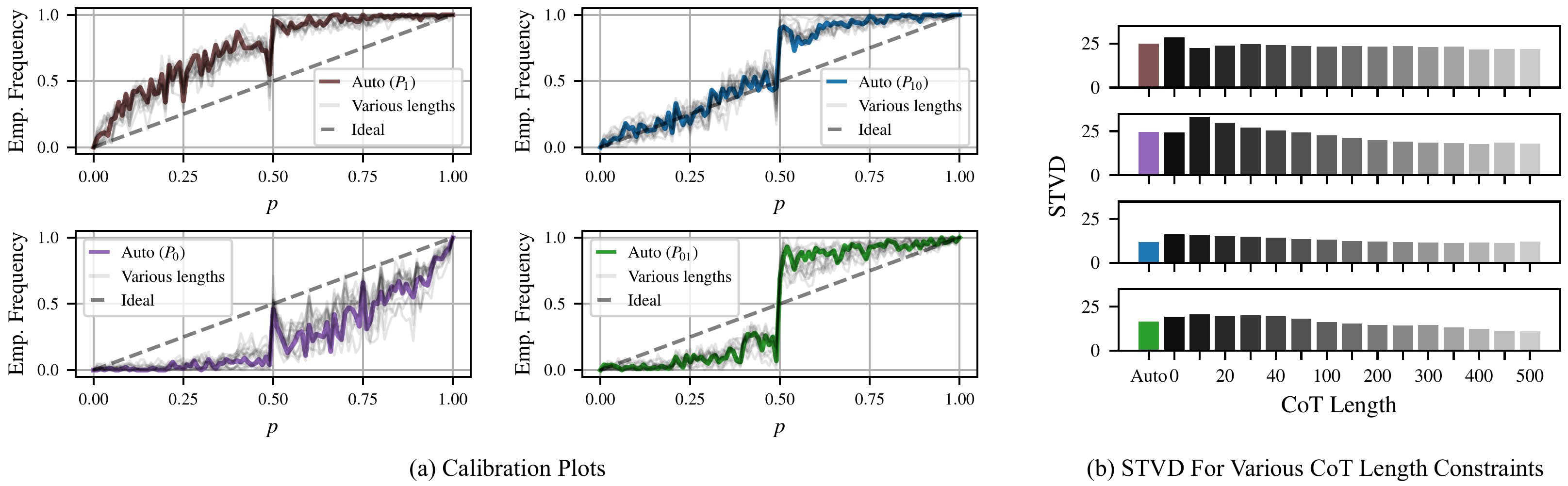}

    \caption{Calibration plots and STVD trend for various reasoning length constraints.}
    \label{fig:cot_llama}
\end{figure*}

The left panel shows high off-diagonal accuracy for Python generated data (i.e., confidently rejecting incorrect hypotheses), with minor errors along the diagonal due to natural sample variation (e.g., 48 ones out of 100 for $p=0.5$ may lead to confusion with $p=0.48$, hence, rejecting the correct hypotheses).
In contrast, the right panel shows major degradation for LLM-generated samples. Diagonal accuracy drops significantly for all $p_i$, except the edge cases when $p=0.0$ and $p=1.0$.
Moreover, we observe an asymmetry in the off-diagonal entries: the lower triangle of the matrix exhibits much worse accuracy than the upper triangle. This indicates that samples from $p_i$ are often misclassified as having come from $p_j$ with $j>i$, suggesting that the LLM-generated samples are consistently biased toward ones.
These results reveal a clear knowledge–sampling gap: LLMs can evaluate distributions well but fail to sample from them faithfully. Unlike question answering, where each input has a correct target, i.i.d.\ sampling lacks per-instance ground truth, making it a fundamentally different and underexplored capability.

\subsection{How Much Can Prompt Phrasing Reduce Bias?}

\begin{figure*}[t]
    \centering
    \setlength{\abovecaptionskip}{4pt}
    \setlength{\belowcaptionskip}{-7pt}
    \includegraphics[width=0.9\linewidth]{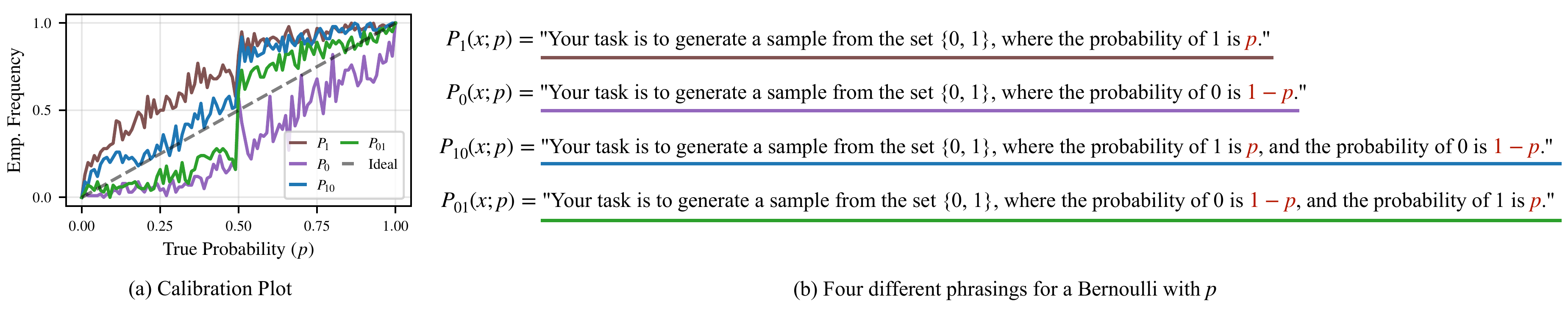}
    \caption{Calibration plots for direct sampling and the four different phrasings.}
    \label{fig:bias_in_phrasing}
\end{figure*}

The previous section used a single fixed phrasing to describe the Bernoulli distribution (see \Cref{fig:sampler_prompt_template}, left). 
Yet, natural language allows many equivalent ways to express the same distribution, raising the question: how much can phrasing affect sampling bias?
In the prior setup, the prompt emphasized the probability of generating a 1, denoted $P_1(x;p)$, as illustrated in \Cref{fig:bias_in_phrasing}(b).
Notably, this formulation focuses solely on the probability of generating a 1, which may partly explain the tendency of the model to produce more 1s than 0s in the in the sampled outputs.

\looseness=-1
To explore this, we test three alternative phrasings that shift or balance the focus across outcomes, as shown in \Cref{fig:bias_in_phrasing}(b).
For each, we sample across a range of $p$ values using Llama-3.1 and plot the frequency of 1s against the ground truth, yielding calibration curves shown in \Cref{fig:bias_in_phrasing}(a).
The calibration curves show that the balanced descriptions, i.e., those stating both probabilities, yield samples that are better calibrated. Nevertheless, all four phrasings result in noticeable bias.
This result resonates with \citep{bar2014heads}, where they found that when prompting humans to imagine a coin flip, mentioning only `heads' or mentioning only `tails' will lead to a similar sampling bias. 

\paragraph{Quantitative comparison using Sum of TV Distance (STVD).}
To quantify the calibration performance of different phrasings, we compute the area between each calibration curve and the ideal diagonal reference line. Specifically, for each $p_i$, we calculate the absolute difference between the empirical frequency $\tilde{p}_i$ and the true value $p_i$, and sum these over all 101 values, i.e., $\mathrm{STVD} = \sum_{i=0}^{100} |\tilde{p}_i - p_i|$. 
Since this absolute difference corresponds to the total variation (TV) distance between two Bernoulli distributions, we refer to the resulting metric as the Sum of TV Distances (STVD) where smaller is better.
See \Cref{app:sec:tv-distance} for more details.

\Cref{tab:direct-sampling-vs-vrs} presents the STVD values for the four phrasings under direct sampling. For Llama 3.1, the best-performing phrasing $P_{10}$ achieves an STVD of 12.50, nearly half that of the baseline $P_1$, which scores 25.36. We also include results for other LLMs, including GPT-4.1-nano, DeepSeekV3~\citep{liu2024deepseek}, and Qwen-2.5 72B~\citep{yang2024qwen2}. Interestingly, the best-performing phrasing varies across models, as highlighted by the underlined entries. The calibration plots for the other models can be found in \Cref{app:sec:table1_direct}.

These findings suggest that while prompt design can influence sampling bias, relying solely on prompt engineering to eliminate bias can be difficult and inconsistent across model family, and additional mechanisms are likely needed for more systematic approaches to correct sampling bias.

\begin{table*}[!b]
    \scriptsize
    \setlength{\tabcolsep}{0pt}
    \renewcommand{\arraystretch}{1.1}
    \caption{Quantitative comparison between Direct Sampling and VRS in STVD ($\downarrow$).}
    \vskip0.5em
    \label{tab:direct-sampling-vs-vrs}
    \centering
    \newcolumntype{C}{>{\centering\arraybackslash}X}
    \begin{tabularx}{\textwidth}{@{}l | CCCC|C | CCCC|C | CCCC|C | CCCC|C @{}}
    \toprule
    \multirow{2}{*}{Method\;}
      & \multicolumn{5}{c|}{Llama-3.1 $70$B}
      & \multicolumn{5}{c|}{GPT-4.1-nano}
      & \multicolumn{5}{c|}{DeepSeekV3}
      & \multicolumn{5}{c}{Qwen-2.5 72B} \\[0.25em]
      & $P_1$ & $P_0$ & $P_{10}$ & $P_{01}$ & mean
      & $P_1$ & $P_0$ & $P_{10}$ & $P_{01}$ & mean
      & $P_1$ & $P_0$ & $P_{10}$ & $P_{01}$ & mean
      & $P_1$ & $P_0$ & $P_{10}$ & $P_{01}$ & mean \\
    \midrule
    Direct
      & 25.36 & 24.79 & \underline{12.50} & 16.59 & \cellcolor{Gray}{19.81}
      & 17.87 & 30.23 & \underline{16.63} & 19.24 & \cellcolor{Gray}{21.00}
      & \underline{17.76} & 19.39 & 20.78 & 23.26 & \cellcolor{Gray}{20.30}
      & 20.73 & \underline{18.72} & 19.00 & 22.64 & \cellcolor{Gray}{20.27} \\
    VRS
      &  5.73 &  7.64 &  \underline{5.36} &  5.60 &  \cellcolor{Gray}{\textbf{6.08}}
      & 12.96 & 13.06 &  9.50 &  \underline{8.46} &  \cellcolor{Gray}{\textbf{11.00}}
      &  5.34 &  9.06 &  \underline{5.29} &  6.94 &  \cellcolor{Gray}{\textbf{6.66}}
      &  5.93 &  6.35 &  \underline{4.49} &  5.12 &  \cellcolor{Gray}{\textbf{5.47}} \\
    \bottomrule
    \end{tabularx}
\end{table*}

\subsection{Does Chain-of-Thought (CoT) Help Sampling?}

Since phrasing alone does not eliminate sampling bias, we explore whether modifying the instruction for the output can help. Prior work~\citep{van2024random,gupta2025enough,hopkins2023can,requeima2024llm} often asks LLMs to output the sample immediately, enabling access to token logits for estimating predictive distributions. However, this approach is constrained to open-source models and treats LLMs more as likelihood models than samplers. In our setting, we only use LLMs for sampling and do not require access to logits or early output. This allows us to apply CoT~\citep{wei2022chain} prompting, where the model first generates reasoning before giving its final answer. While sampling differs from question answering, CoT may increase output variability by encouraging diverse reasoning paths, potentially reducing bias.

To test this, we instruct the model to produce reasoning of varying lengths $N$ (ranging from 0 to 500 words) before answering and an `Auto' setting where no length constraint is imposed.
The `Auto' is the default setting for experiments in previous sections, which uses the template in \Cref{fig:sampler_prompt_template}(left). For different $N$, we modify the `\emph{Explanations}' instruction in the prompt template to include a sentence saying that `\emph{Your analysis must have around $N$ words}'.

\begin{figure}[h]
    \centering
    \includegraphics[width=0.8\linewidth]{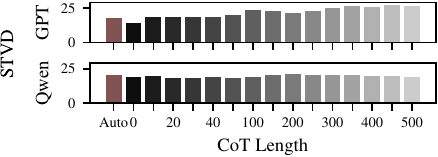}
    \caption{
    STVD vs CoT Length.}
    \vskip-0.5em
  \label{fig:cot_length_gpt_qwen}
\end{figure}

\Cref{fig:cot_llama} presents the calibration plots (left) and STVD scores (right) for Llama-3.1 under various CoT length constraints. 
Reasoning length has little effect on bias, though longer CoT slightly improves calibration. Direct output without reasoning often performs worse than the `Auto' setting.
This pattern does not hold across models: in \Cref{fig:cot_length_gpt_qwen}, GPT-4.1 and Qwen2.5 show no consistent improvement with longer CoT; in some cases, STVD increases as reasoning length grows. 
These mixed results suggest that, unlike in question answering, CoT is not a reliable method for reducing sampling bias, and its effect is model-dependent.
For consistency, we use `Auto' in all other experiments.

\section{Verbalized Rejection Sampling}\label{sec:vrs}
\looseness=-1
In the previous section, we explored ways to reduce sampling bias through prompt phrasing and instruction design. While these strategies do influence the behavior of LLMs, the results suggest that prompt-only interventions are insufficient for reliably eliminating bias. If direct sampling cannot be fully corrected through language alone, we may instead embrace the bias and mitigate it using algorithmic techniques.
In probabilistic methods, several algorithms exist to transform biased proposals into unbiased samples. One such method is rejection sampling, which generates candidate samples from a proposal distribution and selectively accepts them to match a desired target distribution. In the remainder of this section, we adapt rejection sampling to operate entirely within the language interface of LLMs, and we refer to this method as verbalized rejection sampling (VRS).

\subsection{Rejection Sampling}\label{sec:rejection-sampling}

Rejection sampling is a sampling technique to generate samples from a target distribution $P$ while only having access to samples from a (typically simpler) proposal distribution $Q$.
We assume that both $P$ and $Q$ can be evaluated (but only $Q$ can be directly sampled from).
The general idea is that we can generate a sample from $P$ by instead sampling from $Q$ and accepting the sample with probability $P(x)/(MQ(x))$ where $M<\infty$ is a bound on the ratio $P(x)/Q(x)$.
We assume that both $P$ and $Q$ are Bernoulli distributions with parameters $p$ and $q$.
In this case, we can compute $M$ analytically as: $M=\max\{p/q, (1-p)/(1-q)\}$.
Let $A(x)$ denote the acceptance probability of $x\sim Q$ which is
\begin{align}
    A(x) =
    \begin{cases}
        \frac{P(x)}{MQ(x)} = \frac{p}{Mq} & \text{if} \; x = 1\\
        \frac{P(x)}{MQ(x)} = \frac{1-p}{M(1-q)} & \text{if} \; x = 0
    \end{cases}.
    \label{eq:Ax}
\end{align}
The accept/reject step effectively draws a sample from $\mathrm{Bern}(A(x))$.
The overall acceptance rate is $\alpha = \sum_{x \in \{0, 1\}} Q(x)A(x) = 1/M$.
See also \Cref{app:sec:rejection-sampling}.

\subsection{Adapting Rejection Sampling to LLMs}
\Cref{fig:illustration}(c) illustrates the overall idea behind VRS. Classical rejection sampling requires three inputs: the target distribution $P$, the proposal distribution $Q$, and a sample $x\sim Q$.
The algorithm evaluates whether to accept or reject $x$ based on these inputs, returning a binary decision.
To implement this in the LLM setting, we design a prompt template (\Cref{fig:sampler_prompt_template}, right) that verbalizes all three components, i.e., descriptions of $P,Q$, and the proposed sample $x$, as natural language. These are inserted into fixed slots in the template. The model is instructed to reason through its decision and then output a single letter from $\{\mathrm{T}, \mathrm{F}\}$, indicating whether to accept ($\mathrm{T}$) or reject ($\mathrm{F}$) the sample.
We send the completed prompt to the LLM and parse its response. If the response indicates acceptance, we retain the sample; otherwise, we generate a new proposed sample and repeat the process. This loop continues until we collect the required number of accepted samples (pseudocode in \Cref{app:sec:algo}).

\subsection{Experiments}

\begin{figure}
\centering
\includegraphics[width=0.8\linewidth]{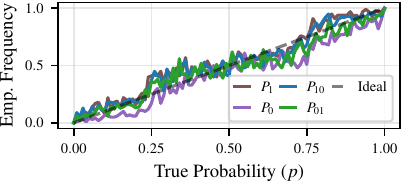}
    \caption{Calibration plot for VRS}
  \label{fig:vrs}
\vskip-2em
\end{figure}

We evaluate VRS on four different LLMs: Llama-3.1, GPT-4.1-nano, DeepSeekV3, and Qwen-2.5. For each model, we run VRS until it accepts 100 samples for each of the 101 values of $p\in [0.0,1.0]$, following the same setup as in the direct sampling experiments.
As the proposal distribution $Q$, we fix it to a uniform Bernoulli with $q=0.5$ across all values of $p$.
The resulting calibration plot for Llama-3.1 is shown in \Cref{fig:vrs}, and the corresponding STVD scores across all models are included in \Cref{tab:direct-sampling-vs-vrs}.
The calibration plots for other three LLMs can be found in \Cref{app:sec:vrs_calibration_others}.

Comparing the calibration plot for VRS (\Cref{fig:vrs}) with that of direct sampling (\Cref{fig:bias_in_phrasing}a), we observe a significant reduction in sampling bias. Across all four prompt phrasings, the calibration curves under VRS closely align with the ideal diagonal reference, indicating much improved fidelity to the target Bernoulli distributions. 
\Cref{fig:vrs_accept_rate} shows the corresponding empirical acceptance probabilities, which seem to align well with the analytical targets. 
The improvement is also reflected quantitatively in \Cref{tab:direct-sampling-vs-vrs}: the STVD scores for VRS are substantially lower than those for direct sampling, with most cases showing a reduction of over 50\%. In some instances, STVD drops to nearly 25\% of the original value. Crucially, this improvement holds across all four LLMs tested (i.e., Llama-3.1, GPT-4.1-nano, DeepSeekV3, and Qwen-2.5), demonstrating that VRS consistently mitigates bias and does so independently of the underlying model.

\begin{figure*}[t]
    \centering
    \setlength{\abovecaptionskip}{4pt}
    \setlength{\belowcaptionskip}{-7pt}
    \includegraphics[width=0.9\textwidth]{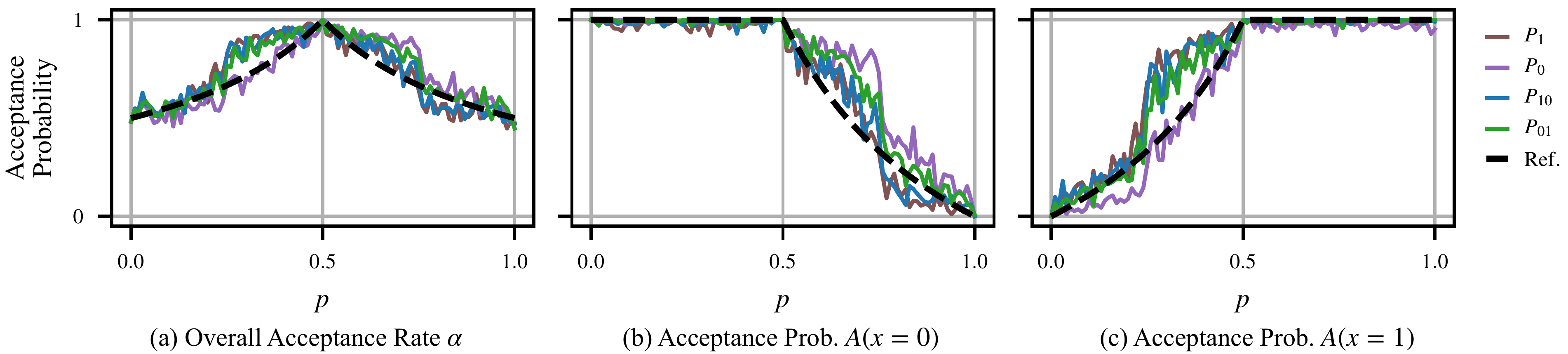}
  \caption{Empirical acceptance rates for VRS.}
  \label{fig:vrs_accept_rate}
\end{figure*}

\section{Why Does VRS Work?}\label{sec:why}

The effectiveness of VRS in reducing sampling bias is surprising at first glance since, internally, VRS still relies on the LLM to perform a Bernoulli trial, i.e., deciding whether to accept or reject a sample, which is precisely the type of stochastic behavior we have shown LLMs to struggle with. 
\begin{adjustwidth}{-1.5em}{-1.5em} 
\begin{quote}
\centering 
\emph{If LLMs are biased in direct sampling, why does wrapping the decision in rejection sampling help?}
\end{quote}
\end{adjustwidth}
Is the improved calibration a result of the specific prompt design used in VRS? Or does the rejection sampling algorithm itself correct bias, even when implemented via a biased LLM? The remainder of this section explores these possibilities empirically and theoretically.

\subsection{Is the Magic in the Prompt?} \label{sec:magic}

\begin{figure*}[b]
    \centering
    \setlength{\abovecaptionskip}{4pt}
    \setlength{\belowcaptionskip}{-7pt}
     \includegraphics[width=0.9\textwidth]{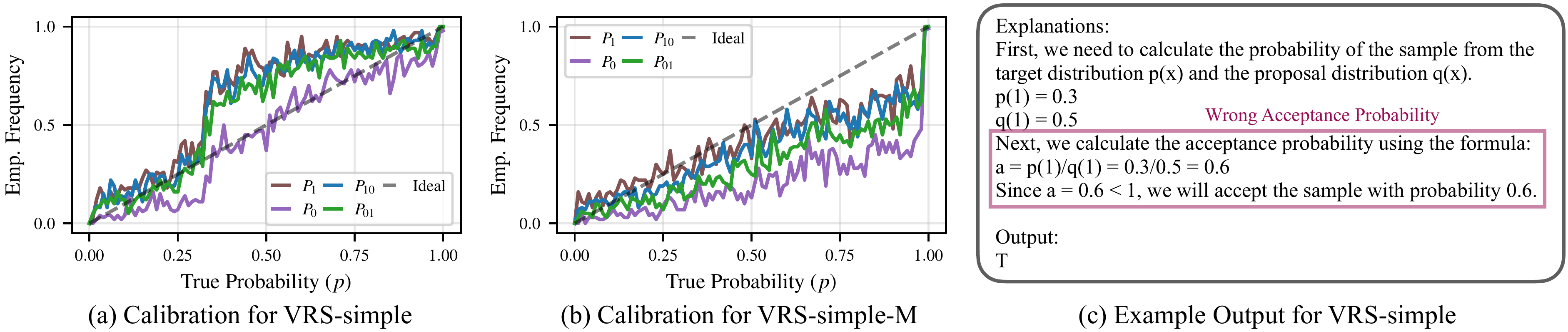}
    \caption{Calibration plots for two ablations and an example LLMs output for VRS-simple.}
    \label{fig:vrs_ablation_all_llama}
\end{figure*}

\looseness=-1
To test whether VRS's improvement stems from prompt design, we remove external randomness by fixing the proposed sample to $x=1$.
In this case, a faithful LLM should accept with probability $A(1)$, as defined in \Cref{eq:Ax}.
We compare this with the empirical acceptance probability $\tilde{A}(1)$, estimated from the LLM's responses.
\Cref{fig:vrs_accept_rate}(c) shows $\tilde{A}(1)$ for various $p$, using a fixed proposal $Q=\mathrm{Bern}(0.5)$.
For the trivial case $p>0.5$, the alignment is strong.
For $p<0.5$, the results appear reasonable but show a consistent bias, particularly in the range $p\in [0.2,0.5]$.
To compare directly with direct sampling, we evaluate $\tilde{A}(1)$ over 101 equally spaced values of $A(1)$, using the inverse of \Cref{eq:Ax} to recover the corresponding $p$.
For each, we generate a VRS prompt with the computed $p$, a fixed $Q=\mathrm{Bern}(0.5)$, and a fixed sample $x=1$.
We refer to this setup with fixed proposal and no introduced randomness as \emph{VRS-simple}.
If prompt design alone explains the improvement, VRS-simple should outperform direct sampling in calibration.

\begin{table}
    \scriptsize
    \setlength{\tabcolsep}{2pt}
    \renewcommand{\arraystretch}{0.9}
    \caption{Ablation STVD ($\downarrow$). 
    Mean is computed over $5$ runs.}
    \vskip-0.5em
    \label{tab:ablation-llama}
    \centering
    \begin{tabular}{c|ccccc}
        \toprule
        Method & Direct & VRS & VRS-simple & VRS-simple-M & VRS-M \\
        \midrule
        Mean
        & \cellcolor{Gray}{$19.81 \pm .15$}
        & \cellcolor{Gray}{$6.08 \pm .12$}
        & \cellcolor{Gray}{$11.86 \pm .13$}
        & \cellcolor{Gray}{$18.45 \pm .35$}
        & \cellcolor{Gray}{$7.36 \pm .14$} \\
        \bottomrule
    \end{tabular}
    \vskip-1em
\end{table}

\Cref{fig:vrs_ablation_all_llama}(a) shows the calibration plot for VRS-simple using Llama-3.1. Compared to direct sampling~(\Cref{fig:bias_in_phrasing}a), the results are slightly more calibrated. \Cref{tab:ablation-llama} confirms this, with the mean STVD dropping from 19.81 to 11.86 (see \Cref{app:sec:full_prompt_ablation} for the full table). This suggests the VRS prompt helps reduce bias for direct sampling.
However, VRS-simple relies on explicitly computing the inverse of \Cref{eq:Ax} to tailor the prompt to each target $p$, and the improvement remains modest compared to full VRS.

\paragraph{Magic or Mirage?}
To further understand why the VRS prompt improves sampling, we examine whether its structure encourages the model to reason differently. Phrasing the sampling task in the context of rejection sampling might prompt the LLM to internally compute acceptance probabilities, potentially disrupting biases learned during pretraining.
To test this, we manually analyzed the model's reasoning outputs from VRS-simple (see \Cref{fig:vrs_ablation_all_llama}(c)). We found that, while the model often tries to derive the acceptance probability, it frequently does so incorrectly.
In the non-trivial cases where $A(x)\neq1$, the model tends to compute only the ratio $P(x)/Q(x)$, omitting the constant $M$ in the denominator.

\emph{Could this incorrect derivation be the reason behind the improvement?} 
To test that, we designed variants of VRS-simple and VRS where we explicitly instruct the model to compute and use $M$ correctly.
We refer to these as VRS-simple-M and VRS-M, respectively.
The calibration plot for VRS-simple-M is shown in \Cref{fig:vrs_ablation_all_llama}(b), with corresponding STVD scores in \Cref{tab:ablation-llama}. Through output inspection, we verified that the LLM now correctly computes the constant $M$ in its reasoning.
The correction in VRS-simple-M leads to slightly better performance from 19.81 to 18.45. However, for the full VRS setup, adding the $M$-instruction results in a slight degradation, with STVD rising from 6.08 to 7.36, though still outperforming direct sampling.

These results suggest that the improvement from the VRS prompt is not due to accurate computation of the acceptance probability. Instead, the prompt seems to help in an unexpected way, but it alone cannot explain the full benefit. 
The remaining gains likely come from the rejection sampling mechanism itself, rather than prompt phrasing alone.

\subsection{Is the Improvement from the Algorithm?} \label{sec:algo-analysis}
Prompt design alone cannot fully explain the gains from VRS. To analyze the role of the algorithm itself, we model the LLM as a biased Bernoulli sampler. In VRS, this means the acceptance decision is not sampled from the true probability $A(x)$, but from a perturbed version $\tilde{A}(x) = A(x)+e(x)$, where $e(x)$ represents the model's bias. Based on this, we can derive the following proposition.
\begin{repproposition}{prop:biased-acceptance-prob}
[Informal, see \Cref{prop:biased-acceptance-prob} in \Cref{app:sec:biased-acceptance-rate}.]
    \label{prop:biased-acceptance-prob-informal}
    Let $P$ and $Q$ be Bernoulli distributions (target and proposal with parameters $p$ and $q$, respectively).
    Let $\tilde P$ denote the distribution resulting from rejection sampling with acceptance probability $A(x) + e(x)$, and assume a bound on the model's bias, i.e., $|e(x)| \leq c \in \mathbb R$.
    Then, with $M$ defined in \Cref{sec:rejection-sampling}, we have
    \begin{align}
        \mathrm{TV}(\tilde P, P) \leq \frac{Mc}{1 - Mc}.
        \label{eq:bound-biased-acceptance-rate-informal}
    \end{align}
\end{repproposition}
\vspace{-1.5mm}
Intuitively, $M$ says how ``different'' the proposal $Q$ is from $P$ and $c$ bounds the bias in the acceptance step.
For $c=0$, we recover rejection sampling with $\mathrm{TV}(\tilde P, P)=0$.
Otherwise, for $c>0$, $\mathrm{TV}(\tilde P, P)$ grows quadratically with $M$.

From empirical observations (see \Cref{fig:vrs_accept_rate}(b;c)) we note that $c\approx0$ when the acceptance probability is trivial, i.e., for $A(x)=1$. 
We can integrate this assumption and get the following tighter bound.
\begin{repproposition}{prop:half-biased-acceptance-prob}[Informal, see \Cref{prop:half-biased-acceptance-prob} in \Cref{app:sec:half-biased-acceptance-rate}.]
    \label{prop:half-biased-acceptance-prob-informal}
    Following \Cref{prop:biased-acceptance-prob} but with the additional assumption that $A(\cdot)$ is only biased in the non-trivial case, i.e., $\tilde{A}(\hat x) = A(\hat x)+e(\hat x)$ if $A(\hat x)<1$ (whereas $\tilde{A}(x^\ast) = A(x^\ast)$ if $A(x^\ast)=1$), we have
    \begin{align}
        \mathrm{TV}(\tilde P, P) 
        \leq 
        \frac{Q(\hat x) Mc}{(1-Q(\hat x)Mc)}
        .
        \label{eq:bound-half-biased-informal}
    \end{align}
\end{repproposition}
\vspace{-1.5mm}
Out of the two events for $x$, let $\hat x$ refer to the event that achieves non-trivial $A(\hat x)<1$.
Intuitively, $Q(\hat x)$ ``damps'' the error as $Q(\hat x) \leq 1$, which results in a tighter bound.

Let $\bar P$ denote the distribution of direct sampling with the same bias $e(x)$.
We now derive when VRS ($\tilde P$) is better than direct sampling ($\bar P$).
We get the following corollary.

\begin{repcorollary}{prop:comparison-half-biased-direct-sapling}[Informal, see \Cref{prop:comparison-half-biased-direct-sapling} in \Cref{app:sec:comparison-half-biased-direct-sampling}.]
    \label{prop:comparison-direct-half-biased-informal} 
    \hskip-1em Following \Cref{prop:half-biased-acceptance-prob} and assuming that $\bar P$ has the same bias as in $\tilde A(x)$, i.e., $\bar{p} = p + e, e\leq|c|$, then (with $\alpha = 1/M$, see \Cref{sec:rejection-sampling})
    \begin{align}
        \begin{split}
        \mathrm{TV}(\tilde P, P)<\mathrm{TV}(\bar P, P)
        \overset{(i)}\Longleftrightarrow 
        \frac{Q(\hat x)}{\alpha}(1+c) < 1
        \\
        \overset{(ii)}\Longleftrightarrow
        c < \frac{1}{Q(\hat x)M} -1
        .
        \end{split}
        \label{eq:prop-comparison-direct-half-biased-informal}
    \end{align}
\end{repcorollary}
\vspace{-1.5mm}
This implies: $(i)$ VRS outperforms direct sampling if the biased event ($\hat x$) is rare, i.e., if $Q(\hat x)$ is smaller than the acceptance rate $\alpha$;
$(ii)$ equivalently, this implies a bound on $c$, which is maximized when $Q$ and $P$ are similar, i.e., if $M\to 1$.
Otherwise, direct sampling is better than VRS.

\begin{figure}[t]
\centering
\includegraphics[width=0.8\linewidth]{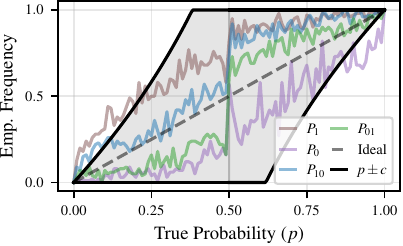}
    \caption{
    Calibration plots with error bounds $\pm c$ overlaid.}
  \label{fig:calibration_with_c}
  \vskip-1em
\end{figure}
In our experiments we fix the proposal to $q=0.5$. 
This allows us, for each $p$, to compute $M$ and $c$ (i.e., the upper bound on $|e(x)|$) under which VRS outperforms direct sampling.
\Cref{fig:calibration_with_c} shows the calibration plot of \Cref{fig:bias_in_phrasing}(a) and plots a shaded black box of $p$ plus and minus $c$, i.e., $\mathrm{clip}(p\pm c, 0.0, 1.0)$.
The box is largest (vertically) if $P$ and $Q$ are similar (i.e., $M\to1$, see $(ii)$ above).
Most often, the empirical frequencies of direct sampling fall within this box, satisfying condition $(ii)$.
This provides strong evidence that the primary source of VRS's improvement comes from the rejection sampling algorithm and not from prompt effects.

\section{Conclusion}  \label{sec:conclusion-limitation}

We examined the ability of LLMs to sample from natural-language-described distributions, using Bernoulli as a test case. While LLMs can evaluate whether data matches a distribution, they struggle to generate unbiased samples, revealing a knowledge-sampling gap.
This highlights that \emph{sampling is a fundamentally distinct ability from question answering}: evaluation tasks have clear supervision, while i.i.d. sampling lacks per-instance ground truth and is only verifiable at the distribution level.
We showed that prompt phrasing or chain-of-thought reasoning can not guarantee improvement. 
To address this, we proposed Verbalized Rejection Sampling (VRS), a lightweight adaptation of classical rejection sampling expressed entirely in natural language. VRS improves calibration across models without accessing logits or tuning decoding parameters, and our analysis shows that the algorithm, and not prompt design, is key to its success.
Although our main analysis is for Bernoulli (being recognized as a foundational testbed for assessing LLM sampling behavior~\citep{gu2024llms,van2024random,gupta2025enough}), we observed the effectiveness of VRS also in Binomial distributions (see \Cref{app:sec:sampling-binomial}), demonstrating that the framework can be adapted to more complex families.
Beyond correcting this specific failure mode, our work points to a broader path: integrating principled randomness into LLM-based systems. 
VRS illustrates how classical probabilistic tools can be verbalized and embedded into LLM workflows improving reliability without relying on opaque prompt engineering.

\section*{Impact Statement}

This paper presents work whose goal is to advance the field of Machine Learning. 
There are many potential societal consequences of our work, none which we feel must be specifically highlighted here.

\bibliography{ref}

\begin{thebibliography}{20}
\providecommand{\natexlab}[1]{#1}
\providecommand{\url}[1]{\texttt{#1}}
\expandafter\ifx\csname urlstyle\endcsname\relax
  \providecommand{\doi}[1]{doi: #1}\else
  \providecommand{\doi}{doi: \begingroup \urlstyle{rm}\Url}\fi

\bibitem[Bar-Hillel et~al.(2014)Bar-Hillel, Peer, and Acquisti]{bar2014heads}
Bar-Hillel, M., Peer, E., and Acquisti, A.
\newblock “heads or tails?”—a reachability bias in binary choice.
\newblock \emph{Journal of Experimental Psychology: Learning, Memory, and
  Cognition}, 40\penalty0 (6):\penalty0 1656, 2014.

\bibitem[Cai(2023)]{cai2023vocabulary}
Cai, Y.
\newblock Vocabulary for universal approximation: A linguistic perspective of
  mapping compositions.
\newblock \emph{arXiv preprint arXiv:2305.12205}, 2023.

\bibitem[Cao et~al.(2025)Cao, Liu, Arora, Augenstein, R{\"o}ttger, and
  Hershcovich]{cao2025specializing}
Cao, Y., Liu, H., Arora, A., Augenstein, I., R{\"o}ttger, P., and Hershcovich,
  D.
\newblock Specializing large language models to simulate survey response
  distributions for global populations.
\newblock \emph{arXiv preprint arXiv:2502.07068}, 2025.

\bibitem[Carpenter et~al.(2017)Carpenter, Gelman, Hoffman, Lee, Goodrich,
  Betancourt, Brubaker, Guo, Li, and Riddell]{carpenter2017stan}
Carpenter, B., Gelman, A., Hoffman, M.~D., Lee, D., Goodrich, B., Betancourt,
  M., Brubaker, M., Guo, J., Li, P., and Riddell, A.
\newblock Stan: A probabilistic programming language.
\newblock \emph{Journal of statistical software}, 76:\penalty0 1--32, 2017.

\bibitem[Grattafiori et~al.(2024)Grattafiori, Dubey, Jauhri, Pandey, Kadian,
  Al-Dahle, Letman, Mathur, Schelten, Vaughan, et~al.]{grattafiori2024llama}
Grattafiori, A., Dubey, A., Jauhri, A., Pandey, A., Kadian, A., Al-Dahle, A.,
  Letman, A., Mathur, A., Schelten, A., Vaughan, A., et~al.
\newblock The llama 3 herd of models.
\newblock \emph{arXiv preprint arXiv:2407.21783}, 2024.

\bibitem[Gu et~al.(2024)Gu, Pang, Shen, and Cheng]{gu2024llms}
Gu, J., Pang, L., Shen, H., and Cheng, X.
\newblock Do llms play dice? exploring probability distribution sampling in
  large language models for behavioral simulation.
\newblock \emph{arXiv preprint arXiv:2404.09043}, 2024.

\bibitem[Gupta et~al.(2025)Gupta, Corona, Ge, Wang, Klein, Darrell, and
  Chan]{gupta2025enough}
Gupta, R., Corona, R., Ge, J., Wang, E., Klein, D., Darrell, T., and Chan,
  D.~M.
\newblock Enough coin flips can make llms act bayesian.
\newblock \emph{arXiv preprint arXiv:2503.04722}, 2025.

\bibitem[Hopkins \& Renda(2023)Hopkins and Renda]{hopkins2023can}
Hopkins, A.~K. and Renda, A.
\newblock Can llms generate random numbers? evaluating llm sampling in
  controlled domains.
\newblock In \emph{Sampling and Optimization in Discrete Space (SODS) ICML 2023
  Workshop}, 2023.

\bibitem[Kwon et~al.(2023)Kwon, Li, Zhuang, Sheng, Zheng, Yu, Gonzalez, Zhang,
  and Stoica]{kwon2023efficient}
Kwon, W., Li, Z., Zhuang, S., Sheng, Y., Zheng, L., Yu, C.~H., Gonzalez, J.~E.,
  Zhang, H., and Stoica, I.
\newblock Efficient memory management for large language model serving with
  pagedattention.
\newblock In \emph{ACM SIGOPS 29th Symposium on Operating Systems Principles},
  2023.

\bibitem[Liu et~al.(2024)Liu, Feng, Xue, Wang, Wu, Lu, Zhao, Deng, Zhang, Ruan,
  et~al.]{liu2024deepseek}
Liu, A., Feng, B., Xue, B., Wang, B., Wu, B., Lu, C., Zhao, C., Deng, C.,
  Zhang, C., Ruan, C., et~al.
\newblock Deepseek-v3 technical report.
\newblock \emph{arXiv preprint arXiv:2412.19437}, 2024.

\bibitem[Meister et~al.(2024)Meister, Guestrin, and
  Hashimoto]{meister2024benchmarking}
Meister, N., Guestrin, C., and Hashimoto, T.
\newblock Benchmarking distributional alignment of large language models.
\newblock \emph{arXiv preprint arXiv:2411.05403}, 2024.

\bibitem[Requeima et~al.(2024)Requeima, Bronskill, Choi, Turner, and
  Duvenaud]{requeima2024llm}
Requeima, J., Bronskill, J., Choi, D., Turner, R., and Duvenaud, D.~K.
\newblock Llm processes: Numerical predictive distributions conditioned on
  natural language.
\newblock In \emph{NeurIPS}, 2024.

\bibitem[Robert et~al.(1999)Robert, Casella, and Casella]{robert1999monte}
Robert, C.~P., Casella, G., and Casella, G.
\newblock \emph{Monte Carlo statistical methods}, volume~2.
\newblock Springer, 1999.

\bibitem[Van~Koevering \& Kleinberg(2024)Van~Koevering and
  Kleinberg]{van2024random}
Van~Koevering, K. and Kleinberg, J.
\newblock How random is random? evaluating the randomness and humaness of llms'
  coin flips.
\newblock \emph{arXiv preprint arXiv:2406.00092}, 2024.

\bibitem[Vidler \& Walsh(2025)Vidler and Walsh]{vidler2025evaluating}
Vidler, A. and Walsh, T.
\newblock Evaluating binary decision biases in large language models:
  Implications for fair agent-based financial simulations.
\newblock \emph{arXiv preprint arXiv:2501.16356}, 2025.

\bibitem[Wald(1949)]{wald1949statistical}
Wald, A.
\newblock Statistical decision functions.
\newblock \emph{The Annals of Mathematical Statistics}, pp.\  165--205, 1949.

\bibitem[Wei et~al.(2022)Wei, Wang, Schuurmans, Bosma, Xia, Chi, Le, Zhou,
  et~al.]{wei2022chain}
Wei, J., Wang, X., Schuurmans, D., Bosma, M., Xia, F., Chi, E., Le, Q.~V.,
  Zhou, D., et~al.
\newblock Chain-of-thought prompting elicits reasoning in large language
  models.
\newblock In \emph{NeurIPS}, 2022.

\bibitem[Xiao et~al.(2025)Xiao, Bamler, Sch{\"o}lkopf, and
  Liu]{xiao2024verbalized}
Xiao, T.~Z., Bamler, R., Sch{\"o}lkopf, B., and Liu, W.
\newblock Verbalized machine learning: Revisiting machine learning with
  language models.
\newblock \emph{Transactions on Machine Learning Research}, 2025.

\bibitem[Xu et~al.(2024)Xu, Nandi, and Markopoulos]{xu2024application}
Xu, Y., Nandi, A., and Markopoulos, E.
\newblock Application of large language models in stochastic sampling
  algorithms for predictive modeling of population behavior.
\newblock \emph{Artificial Intelligence and Social Computing}, 122:\penalty0
  10--20, 2024.

\bibitem[Yang et~al.(2024)Yang, Yang, Zhang, Hui, Zheng, Yu, Li, Liu, Huang,
  Wei, et~al.]{yang2024qwen2}
Yang, A., Yang, B., Zhang, B., Hui, B., Zheng, B., Yu, B., Li, C., Liu, D.,
  Huang, F., Wei, H., et~al.
\newblock Qwen2. 5 technical report.
\newblock \emph{arXiv preprint arXiv:2412.15115}, 2024.

\end{thebibliography}
\bibliographystyle{icml2026}

\newpage
\appendix
\onecolumn
\phantomsection
\addcontentsline{toc}{section}{Appendix}
\renewcommand \thepart{}
\renewcommand \partname{}
\part{\Large{\centerline{Appendix for VRS}}}
\parttoc

\newpage
\section{Theoretical Analysis for Biased (Rejection) Sampling from Bernoulli Distributions} \label{app:sec:sampling}

This section is structured as follows.
We introduce the total variation distance in \Cref{app:sec:tv-distance} and state the general rejection sampling problem in \Cref{app:sec:rejection-sampling}.
In \Cref{app:sec:biased-acceptance-rate} we bound the worst case error assuming that the acceptance probability of the rejection sampling algorithm is biased.
\Cref{app:sec:half-biased-acceptance-rate} bounds the worst case error assuming that the model can draw exact samples if the acceptance probability is $1$ and is biased in the other case.
In \Cref{app:sec:comparison-half-biased-direct-sampling} we compare the previous bound to the error of a biased Bernoulli distribution.

\subsection{Total Variation Distance} \label{app:sec:tv-distance}

The total variation (TV) distance measures the statistical distance between two probability distributions.
We will state it below for the case where both distributions are Bernoulli.

Let $P_1$ and $P_2$ denote the probability mass functions of Bernoulli distributions with parameters $p_1$ and $p_2$, respectively.
We can write the TV distance between $P_1$ and $P_2$ as 
\begin{align}
    D_\mathrm{TV}(P_1, P_2) =
    \frac 1 2 
    \sum_{x \in \{0, 1\}}
    |P_1(x) - P_2(x)|
    = |p_1 - p_2|.
    \tag{TV}
    \label{eq:tv-distance}
\end{align}

\subsection{Rejection Sampling} \label{app:sec:rejection-sampling}

Rejection sampling (RS) is a sampling technique to generate samples from a distribution $P$ while only having access to samples from a distribution $Q$ but assuming that both $P$ and $Q$ can be evaluated.
The general idea is that we can generate a sample from $P$ by instead sampling from $Q$ and accepting the sample with probability $P(x)/(MQ(x))$ where $M<\infty$ is a bound on the ratio $P(x)/Q(x)$.

Assume both $P$ and $Q$ are Bernoulli distributions with parameters $p$ and $q$.
We can compute $M$ analytically as
\begin{align*}
    M\coloneqq\max
    \left\lbrace
    \frac p q, \frac{1-p}{1-q}
    \right\rbrace.
\end{align*}
Let $A(x)$ denote the acceptance probability
\begin{align*}
    A(x) =
    \begin{cases}
        \frac{P(x)}{MQ(x)} = \frac{p}{Mq} & \text{if} \; x = 1\\
        \frac{P(x)}{MQ(x)} = \frac{1-p}{M(1-q)} & \text{if} \; x = 0
    \end{cases}.
\end{align*}
Let $A$ denote the acceptance event.
The unconditional acceptance probability $\mathbb P(A)$---called the acceptance rate $\alpha$---is the proportion of proposed samples that are accepted. 
It is given by
\begin{align*}
    \alpha 
    \coloneqq
    \mathbb P \left(
        U \leq \frac{P(x)}{MQ(x)}
    \right)
    =
    \mathbb E\left(\frac{P(x)}{MQ(x)}\right)
    =
    \sum_{x \in \{0, 1\}} Q(x)A(x) 
    = \frac p M + \frac{1-p}{M} = \frac{1}{M}.
\end{align*}
where $U\sim\mathrm{Unif}(0, 1)$.
The law of the accepted samples is
\begin{align*}
    \mathbb P(X = x \mid A)
    = \frac{\mathbb P(X=x, A)}{\mathbb P(A)}
    = \frac{Q(x)A(x)}{\alpha}
    = \frac{Q(x)\frac{P(x)}{MQ(x)}}{\alpha}
    = \frac{P(x)/M}{\alpha}
    = P(x).
\end{align*}

\subsection{Biased Acceptance Probability} \label{app:sec:biased-acceptance-rate}

We will establish a worst-case bound in terms of the TV distance for the case that the acceptance probability is biased.

\begin{proposition}
    \label{prop:biased-acceptance-prob}
    Let $P(x), Q(x)$ be Bernoulli distributions with parameters $p$ and $q$, respectively, where $P$ is the target distribution that we want to sample from with rejection sampling and $Q(x)$ is the proposal distribution.
    Further, let $\tilde P(x)$ denote the Bernoulli distribution resulting from a biased accept/reject step where we assume that the acceptance probability $\tilde A(x)$ is biased as $\tilde A(x) = A(x) + e(x)$ where $|e(x)| \leq c \in \mathbb R$.
    Then, 
    \begin{align}
        D_\mathrm{TV}(\tilde P, P) \leq \frac{Mc}{1 - Mc},
        \tag{L1}
        \label{eq:bound-biased-acceptance-probability}
    \end{align}
    where $M=\max\{p/q, (1-p)/(1-q)\}$.
\end{proposition}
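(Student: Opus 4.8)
The plan is to compute the biased target distribution $\tilde P$ in closed form, write its deviation from $P$ as a single ratio, and then bound numerator and denominator separately. First I would mirror the derivation of the law of accepted samples from \Cref{app:sec:rejection-sampling}, now with acceptance probability $\tilde A(x) = A(x) + e(x)$. The biased acceptance rate becomes $\tilde\alpha = \sum_{x\in\{0,1\}} Q(x)\tilde A(x) = \alpha + \delta$, where $\delta \coloneqq \sum_x Q(x)e(x)$, and the distribution of accepted samples is
\begin{align*}
    \tilde P(x) = \frac{Q(x)\tilde A(x)}{\tilde\alpha} = \frac{Q(x)A(x) + Q(x)e(x)}{\alpha + \delta}.
\end{align*}

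Next I would exploit the identity $Q(x)A(x) = P(x)/M = \alpha P(x)$ (valid because $\alpha = 1/M$). Subtracting $P(x)$ and clearing the common denominator yields the clean form
\begin{align*}
    \tilde P(x) - P(x) = \frac{Q(x)e(x) - P(x)\delta}{\alpha + \delta}.
\end{align*}
Specializing to $x = 1$, substituting $\delta = Q(0)e(0) + Q(1)e(1)$, and using $1 - P(1) = P(0)$, the numerator collapses to $Q(1)P(0)e(1) - P(1)Q(0)e(0)$, i.e.\ the two bias terms enter with the "cross" weights of $P$ and $Q$.

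Since for Bernoulli distributions $D_\mathrm{TV}(\tilde P, P) = |\tilde P(1) - P(1)|$ (see \Cref{app:sec:tv-distance}), I would then bound the two parts of this ratio. By the triangle inequality and $|e(x)| \le c$, the numerator is at most $c\,(Q(1)P(0) + P(1)Q(0))$. For the denominator, the reverse triangle inequality gives $|\alpha + \delta| \ge \alpha - |\delta| \ge \alpha - c = 1/M - c$, which simultaneously records the implicit requirement $Mc < 1$ that keeps $\tilde\alpha > 0$ (so the division is legitimate) and the bound finite. Combining produces
\begin{align*}
    D_\mathrm{TV}(\tilde P, P) \le \frac{cM\,(Q(1)P(0) + P(1)Q(0))}{1 - Mc}.
\end{align*}

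The final and most delicate step is to observe that the surviving factor satisfies $Q(1)P(0) + P(1)Q(0) = q(1-p) + p(1-q) = p + q - 2pq \le 1$, which follows from $1 - (p+q-2pq) = (1-p)(1-q) + pq \ge 0$; this absorbs into the estimate and leaves exactly $Mc/(1-Mc)$. I expect the main obstacle to be conceptual rather than computational: recognizing that the cross-weighted combination $p+q-2pq$ is the quantity that appears and that it is bounded by $1$ is what makes the prefactor disappear, while the bookkeeping lies in keeping track of the sign and positivity of $\tilde\alpha = \alpha + \delta$ so that the worst-case denominator $1/M - c$ is used correctly.
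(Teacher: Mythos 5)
Your proof is correct, and its skeleton matches the paper's: both derive the biased acceptance rate $\tilde\alpha = \alpha + \delta$ with $\delta = \sum_x Q(x)e(x)$, write the law of accepted samples as $\tilde P(x) = Q(x)\tilde A(x)/\tilde\alpha$, express the deviation from $P$ as a single ratio, apply the triangle inequality with $|e(x)|\le c$ and $|\delta|\le c$, and divide by the worst-case denominator $\alpha - c$ (both arguments implicitly need $c \le \alpha$, i.e.\ $Mc< 1$, exactly as you flag). The difference lies in the finishing step. The paper keeps $\delta$ unexpanded, bounds each outcome separately as $|\tilde P(x)-P(x)| \le Q(x)\frac{c}{\alpha-c}\bigl(1+A(x)/\alpha\bigr)$, and sums over both outcomes with the $\tfrac12$ factor, where the normalizations $\sum_x Q(x)=1$ and $\sum_x Q(x)A(x)=\alpha$ make the sum collapse exactly to $c/(\alpha-c)=Mc/(1-Mc)$. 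You instead evaluate the Bernoulli TV distance at the single point $x=1$, expand $\delta = Q(0)e(0)+Q(1)e(1)$, and exploit the resulting cancellation to obtain the cross-weighted numerator $Q(1)P(0)e(1)-P(1)Q(0)e(0)$. This buys you something the paper's version does not: an intermediate bound
\begin{align*}
D_\mathrm{TV}(\tilde P, P) \;\le\; \frac{Mc\,(p+q-2pq)}{1-Mc},
\end{align*}
which is strictly tighter than the stated bound whenever $p+q-2pq<1$, and which you then relax via $1-(p+q-2pq)=(1-p)(1-q)+pq\ge 0$. Conversely, the paper's symmetric treatment uses nothing about the binary sample space beyond the normalization identities, so it is the variant that transfers more directly to larger finite sample spaces. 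Every step of your argument checks out, including the identity $Q(x)A(x)=\alpha P(x)$ that cleans up the deviation ratio and the positivity bookkeeping for $\tilde\alpha$.
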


\begin{proof}
Assuming a biased acceptance probability $\tilde A(x)$, we can split the resulting acceptance rate into
\begin{align*}
    \tilde \alpha
    = \sum_{x \in \{0, 1\}} Q(x)(A(x) + e(x))
    = \underbrace{\sum_{x \in \{0, 1\}} Q(x)A(x)}_{=\alpha} 
    + \underbrace{\sum_{x \in \{0, 1\}} Q(x)e(x)}_{\eqqcolon \delta},
\end{align*}
where $\alpha$ corresponds to the unbiased acceptance rate and $\delta$ denotes the deviation from it.
We assume that $0 \leq \tilde A(x) \leq 1$.
Note that $|\delta| \leq c$ and, therefore, $\tilde \alpha = \alpha + \delta \geq \alpha - c \geq 0$.
Let $\tilde A$ denote the acceptance event.
We denote the resulting law of the accepted samples by $\tilde P$.
\begin{align*}
    \mathbb P(X = x \mid \tilde A)
    = \frac{\mathbb P(X=x, \tilde A)}{\mathbb P(\tilde A)}
    = \frac{Q(x)\tilde A(x)}{\tilde \alpha}
    \eqqcolon  \tilde P(x).
\end{align*}
We can now upper-bound a term in the TV distance as follows.
\begin{align}
    |\tilde P(x) - P(x)|
    &=
    \left\lvert 
    \frac{Q(x)\tilde A(x)}{\tilde \alpha}
    -
    \frac{Q(x)A(x)}{\alpha}
    \right\rvert \nonumber
    = 
    \left\lvert 
    \frac{Q(x)}{\alpha\tilde\alpha}
    (\tilde A(x)\alpha - A(x)\tilde\alpha)
    \right\rvert \nonumber\\
    &= 
    \left\lvert 
    \frac{Q(x)}{\alpha\tilde\alpha}
    ((A(x) + e(x))\alpha - A(x) (\alpha + \delta))
    \right\rvert \nonumber
    = 
    \left\lvert 
    \frac{Q(x)}{\alpha\tilde\alpha}
    (e(x)\alpha - A(x)\delta))
    \right\rvert \nonumber\\
    &= 
    Q(x)
    \left\lvert 
    \frac{e(x)\alpha - A(x)\delta}{\alpha\tilde\alpha}
    \right\rvert \label{eq:term-tv-distance}\\
    &\leq
    Q(x)
    \frac{|e(x)|\alpha + A(x)|\delta|}{\alpha(\alpha - c)}\label{eq:biased-tv-distance-triangle}\\
    &\leq
    Q(x)
    \frac{c\alpha + A(x)c}{\alpha(\alpha-c)} \label{eq:biased-tv-distance-error}\\
    &=
    Q(x) \frac{c}{\alpha-c}
    \left(
    1 + \frac{A(x)}{\alpha}
    \right) 
    \nonumber
\end{align}
In \Cref{eq:biased-tv-distance-triangle} we used the triangle inequality, in \Cref{eq:biased-tv-distance-error} we used $|e(x)|\leq c$.
For the full TV distance, we get
\begin{align*}
    D_\mathrm{TV}(\tilde P, P)
    =
    \frac 1 2 
    \sum_{x \in \{0, 1\}}
    \frac{c}{\alpha-c}
    Q(x)
    \left(
    1 + \frac{A(x)}{\alpha}
    \right)
    = \frac{c}{\alpha-c}
    = \frac{Mc}{1 - Mc}.
\end{align*}
\end{proof}

\subsection{Half-Biased Acceptance Probability} \label{app:sec:half-biased-acceptance-rate}

In the following argument we assume that if $A(x) = 1$ there is no bias, i.e., no error ($A(x)=1\Rightarrow e(x)=0$).

\begin{proposition}
    \label{prop:half-biased-acceptance-prob}
    Let $P(x), Q(x)$ be Bernoulli distributions with parameters $p$ and $q$, respectively, where $P(x)$ is the target distribution that we want to sample from with rejection sampling and $Q(x)$ is the proposal distribution.
    Further, let $\tilde P(x)$ denote the Bernoulli distribution resulting from a biased accept/reject step where we assume that the acceptance probability $\tilde A(x)$ is biased with an additive error $e(x)$ where $|e(x)| \leq c \in \mathbb R$ as 
    \begin{align}
        \tilde A(x) = 
        \begin{cases}
            A(x) + e(x) &\text{if} \; A(x) < 1\\
            A(x) &\text{if} \; A(x) = 1
        \end{cases},
        \label{eq:acceptance-prob-half-biased}
        \tag{M1}
    \end{align}
    where $|e(x)| \leq c \in \mathbb R$.
    Then, 
    \begin{align}
        D_\mathrm{TV}(\tilde P, P) 
        \leq 
        \frac{Q(\hat x) Mc}{(1-Q(\hat x)Mc)},
        \tag{M2}
        \label{eq:bound-half-biased}
    \end{align}
    where $M=\max\{p/q, (1-p)/(1-q)\}$ and $\hat x$ is chosen such that $A(\hat x) < 1$.
\end{proposition}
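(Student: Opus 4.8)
The plan is to follow the proof of \Cref{prop:biased-acceptance-prob} essentially verbatim; the only structural change comes from the half-biased assumption \Cref{eq:acceptance-prob-half-biased}, which localizes the error to the single non-trivial event $\hat x$. First I would record the key consequence of this assumption: since the trivial value $x^\ast$ (the one with $A(x^\ast)=1$) satisfies $e(x^\ast)=0$, the deviation of the acceptance rate $\delta \coloneqq \sum_{x} Q(x) e(x)$ collapses to the single surviving term $\delta = Q(\hat x)\,e(\hat x)$, so that $|\delta| \leq Q(\hat x)\,c$. This is the crucial refinement: it is a factor $Q(\hat x) \leq 1$ tighter than the estimate $|\delta| \leq c$ used in \Cref{prop:biased-acceptance-prob}, and this damping is precisely what propagates into the target bound \Cref{eq:bound-half-biased}.

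Next I would reuse the per-term identity already derived in that proof, namely $|\tilde P(x) - P(x)| = Q(x)\,|e(x)\alpha - A(x)\delta|/(\alpha\tilde\alpha)$ (see \Cref{eq:term-tv-distance}), which continues to hold unchanged. Summing over $x \in \{0,1\}$ and applying the triangle inequality separates the total variation into a direct-error contribution $\sum_x Q(x)|e(x)|\,\alpha$ and a rate-deviation contribution $\sum_x Q(x)A(x)\,|\delta|$. The first sum again collapses to the single term $Q(\hat x)|e(\hat x)|\,\alpha \leq Q(\hat x)\,c\,\alpha$, while the second simplifies via the normalization $\sum_x Q(x)A(x) = \alpha$ to $|\delta|\,\alpha \leq Q(\hat x)\,c\,\alpha$. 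Both contributions thus carry the same factor $Q(\hat x)$, and after dividing by $\alpha\tilde\alpha$ and by the factor $\tfrac12$ in the definition of the TV distance I expect to land on $D_\mathrm{TV}(\tilde P, P) \leq Q(\hat x)\,c / \tilde\alpha$.

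The final step is to control the perturbed acceptance rate from below, $\tilde\alpha = \alpha + \delta \geq \alpha - Q(\hat x)\,c$, and substitute $\alpha = 1/M$; this turns $Q(\hat x)c/\tilde\alpha$ into $Q(\hat x) M c/(1 - Q(\hat x)Mc)$, matching \Cref{eq:bound-half-biased}. I expect the main difficulty to be bookkeeping rather than conceptual: one must apply the localization of the error consistently so that both the direct-error term and the $\delta$-term acquire the same $Q(\hat x)$ factor (a naive reuse of the earlier bound would damp only one of them), and one must check that $\tilde\alpha > 0$, equivalently $Q(\hat x)Mc < 1$, so the denominator stays positive and the bound remains meaningful. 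Beyond the triangle inequality and $|e(\hat x)| \leq c$, no new ingredients are required.
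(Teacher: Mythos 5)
Your proposal is correct and takes essentially the same route as the paper's proof: both localize the acceptance-rate deviation to $\delta = Q(\hat x)e(\hat x)$ (so $|\delta|\leq Q(\hat x)c$), reuse the per-term identity \eqref{eq:term-tv-distance} from \Cref{prop:biased-acceptance-prob}, and finish by lower-bounding $\tilde\alpha \geq \alpha - Q(\hat x)c$ with $\alpha = 1/M$. The only (harmless) difference is bookkeeping: the paper evaluates each of the two terms exactly—using $e(x^\ast)=0$, $A(x^\ast)=1$, and $\alpha - A(\hat x)Q(\hat x) = Q(x^\ast)$, both equal $Q(x^\ast)Q(\hat x)|e(\hat x)|/(\alpha\tilde\alpha)$—and then relaxes via $Q(x^\ast)\leq\alpha$, whereas your summed triangle inequality is slightly looser at the intermediate stage but lands on the identical final bound without needing that last inequality.
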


\begin{proof}
    Let $x^\ast$ be chosen such that $A(x^\ast)=1\Rightarrow e(x^\ast)=0$.
    Let $\hat x$ be chosen such that $A(\hat x)<1$.
    The resulting acceptance rate $\tilde \alpha$ can be states as follows.
    \begin{align*}
        \tilde \alpha
        = \sum_{x \in \{0, 1\}} Q(x)(A(x) + e(x))
        = \underbrace{\sum_{x \in \{0, 1\}} Q(x)A(x)}_{=\alpha} 
        + \underbrace{\vphantom{\sum_{x \in \{0, 1\}}}Q(\hat x)e(\hat x)}_{\eqqcolon \delta} 
    \end{align*}
    We use the law $\tilde P$ resulting from the acceptance rate $\tilde \alpha$ to compute both terms, for $x^\ast$ and $\hat x$, of the TV distance.
    Starting form \Cref{eq:term-tv-distance}, we get 
    \begin{align*}
        |\tilde P(x^\ast) - P(x^\ast)|
        &= 
        Q(x^\ast)
        \left\lvert 
        \frac{e(x^\ast)\alpha - A(x^\ast)\delta}{\alpha\tilde\alpha}
        \right\rvert
        =
        Q(x^\ast)
        \left\lvert 
        \frac{\delta}{\alpha\tilde\alpha}
        \right\rvert
        =
        \frac{Q(x^\ast)Q(\hat x)|c|}{\alpha\tilde\alpha}\\
        &\leq
        \frac{Q(x^\ast) Q(\hat x) c}{\alpha(\alpha-Q(\hat x)c)}
    \end{align*}
    and 
    \begin{align*}
        |\tilde P(\hat x) - P(\hat x)|
        &= 
        Q(\hat x)
        \left\lvert 
        \frac{e(\hat x)\alpha - A(\hat x)\delta}{\alpha\tilde\alpha}
        \right\rvert
        = 
        Q(\hat x)
        |e(\hat x)| 
        \frac{\alpha - A(\hat x)Q(\hat x)}{\alpha\tilde\alpha}\\
        &=
        \frac{Q(\hat x)(1 - Q(\hat x))|e(\hat x)| }{\alpha\tilde\alpha}
        =
        \frac{Q(\hat x)Q(x^\ast)|e(\hat x)| }{\alpha\tilde\alpha}\\
        &\leq
        \frac{Q(x^\ast) Q(\hat x) c}{\alpha(\alpha-Q(\hat x)c)}
    \end{align*}
    where we used the triangle-inequality in both cases.
    Since
    \begin{align*}
        \alpha = Q(x^\ast)A(x^\ast) + Q(\hat x)A(\hat x) = Q(x^\ast) + Q(\hat x)A(\hat x) \geq Q(x^\ast) = 1 - Q(\hat x),
    \end{align*}
    computing the TV distance we get 
    \begin{align*}
        D_\mathrm{TV}(\tilde P, P)
        &\leq
        \frac{Q(x^\ast) Q(\hat x) c}{\alpha(\alpha-Q(\hat x)c)}
        =
        \frac{(1 - Q(\hat x)) Q(\hat x) c}{\alpha(\alpha-Q(\hat x)c)}\\
        &\leq
        \frac{\alpha Q(\hat x) c}{\alpha(\alpha-Q(\hat x)c)}
        =
        \frac{Q(\hat x) c}{(\alpha-Q(\hat x)c)}\\
        &=
        \frac{Q(\hat x) Mc}{(1-Q(\hat x)Mc)}
    \end{align*}
\end{proof}

\subsection{Comparison of Half-Biased Sampling to Direct Sampling} \label{app:sec:comparison-half-biased-direct-sampling}

In the following corollary we are comparing the distributions introduced in \Cref{app:sec:half-biased-acceptance-rate} and biased direct sampling.

\begin{corollary}
    \label{prop:comparison-half-biased-direct-sapling}
    Let $P(x), Q(x)$ be Bernoulli distributions with parameters $p$ and $q$, respectively, where $P(x)$ is the target distribution that we want to sample from with rejection sampling and $Q(x)$ is the proposal distribution.
    Further, let $\tilde P(x)$ denote the Bernoulli distribution resulting from a biased accept/reject step where we assume that the acceptance probability $\tilde A(x)$ is biased with an additive error $e(x)$, where $|e(x)| \leq c \in \mathbb R$, if $A(x)<1$, see \Cref{prop:half-biased-acceptance-prob}, \Cref{{eq:acceptance-prob-half-biased}}.
    Further, let $\bar P(x)$ be the Bernoulli distribution wich parameter $\bar p$ biased by the same additive error as
    \begin{align*}
        \bar p = p + e, |e| \leq c.
        \label{eq:comparison-bias-direct-sampling}
        \tag{L1}
    \end{align*}
    Then, the worst-case total variation error of half-biased rejection sampling is smaller than that of direct sampling if and only if
    \begin{align}
        D_\mathrm{TV}(\tilde P, P)<D_\mathrm{TV}(\bar P, P)
        \Longleftrightarrow 
        \frac{Q(\hat x)}{\alpha}(1+c) \leq 1
        \Longleftrightarrow
        c < \frac{1}{Q(\hat x)M} -1,
        \tag{L2}
    \end{align}
    where $M=\max\{p/q, (1-p)/(1-q)\}$, $\alpha=1/M$, and $\hat x$ is chosen such that $A(\hat x) < 1$.
\end{corollary}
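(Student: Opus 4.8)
The plan is to reduce the stated chain of equivalences to a single comparison between the worst-case total variation errors of the two schemes, after which everything follows by elementary algebra. First I would pin down $D_\mathrm{TV}(\bar P, P)$ for the directly-biased Bernoulli. Because $\bar P$ is itself required to be a valid distribution, the normalization $\bar P(0) + \bar P(1) = 1$ forces $e(0) = -e(1)$; combined with the Bernoulli total-variation identity this gives $D_\mathrm{TV}(\bar P, P) = |e(1)|$, whose worst-case value under the assumption $|e(x)| \le c$ is exactly $c$.

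Next I would invoke \Cref{prop:half-biased-acceptance-prob}, which furnishes the worst-case guarantee $D_\mathrm{TV}(\tilde P, P) \le \frac{Q(\hat x)Mc}{1 - Q(\hat x)Mc}$. Comparing the two worst-case quantities, the assertion $D_\mathrm{TV}(\tilde P, P) < D_\mathrm{TV}(\bar P, P)$ becomes the scalar inequality $\frac{Q(\hat x)Mc}{1 - Q(\hat x)Mc} < c$.

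The remaining work is purely symbolic. Setting $\beta \coloneqq Q(\hat x)M$ and assuming $c > 0$ together with the feasibility condition $1 - \beta c > 0$ (which is exactly what makes the half-biased bound finite and positive), I would multiply through by the positive denominator to get $\beta c < c(1 - \beta c)$, cancel the common factor $c$, and rearrange to $\beta(1 + c) < 1$. Substituting $\beta = Q(\hat x)M$ and $M = 1/\alpha$ yields the middle form $\frac{Q(\hat x)}{\alpha}(1 + c) \le 1$, and solving the same inequality for $c$ produces the third form $c < \frac{1}{Q(\hat x)M} - 1$. Since every step is an equivalence on positive quantities, reading the chain backwards recovers the ``only if'' direction as well.

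The step I expect to require the most care is the sign bookkeeping around $1 - Q(\hat x)Mc$: the bound from \Cref{prop:half-biased-acceptance-prob} is only meaningful when this denominator is strictly positive, and it is precisely this positivity that licenses clearing the denominator without flipping the inequality. I would therefore check that the derived condition $c < \frac{1}{Q(\hat x)M} - 1$ indeed guarantees $1 - Q(\hat x)Mc > 0$, and note the boundary configurations where strict versus non-strict inequalities diverge (which accounts for the mixed $<$ and $\le$ appearing in the statement).
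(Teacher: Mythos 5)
Your proposal is correct and follows essentially the same route as the paper's own proof: bound $D_\mathrm{TV}(\bar P, P)$ by $c$, invoke Proposition~\ref{prop:half-biased-acceptance-prob} for $D_\mathrm{TV}(\tilde P, P) \leq \frac{Q(\hat x)Mc}{1-Q(\hat x)Mc}$, and compare the two worst-case bounds by clearing the (positive) denominator and cancelling $c$. Your added care about normalization forcing $e(0)=-e(1)$, the positivity of $1-Q(\hat x)Mc$, and the mixed strict/non-strict inequalities is a modest tightening of details the paper glosses over, not a different argument.
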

\begin{proof}
We assume that there is an additive error $e$ when sampling with $\bar P$ as in \Cref{eq:comparison-bias-direct-sampling}.
We can calculate the TV distance for $\bar P$ as 
\begin{align}
    D_\mathrm{TV}(\bar P, P)
    = |e|
    \leq c
    \label{eq:tv-biased-direct-sampling}
\end{align}
Further, from \Cref{prop:half-biased-acceptance-prob}, \Cref{eq:bound-half-biased} we know that 
\begin{align}
    D_\mathrm{TV}(\tilde P, P) 
    \leq 
    \frac{Q(\hat x) Mc}{(1-Q(\hat x)Mc)}
    \label{eq:tv-half-biased-sampling}
\end{align}
Therefore, the TV of half-biased rejection sampling (\Cref{eq:tv-half-biased-sampling}) to the ground truth $P$ is smaller than the TV of direct sampling (\Cref{eq:tv-biased-direct-sampling}) if
\begin{align*}
    \frac{Q(\hat x) Mc}{(1-Q(\hat x)Mc)}
    < c
    \Longleftrightarrow 
    \frac{Q(\hat x)}{\alpha}(1+c) \leq 1
    \Longleftrightarrow
    c < \frac{1}{Q(\hat x)M} -1.
\end{align*}
\end{proof}

\newpage
\section{Additional Results for Bernoulli Distributions}

We present additional results and plots that were left out of the main text due to the page limit.
The additional results are consistent with the story discussed in the main text.

\subsection{Direct Sampling Calibration Plots for Other LLMs} \label{app:sec:table1_direct}

\Cref{fig:calibration-p1-gpt-qwen} presents the calibration plots with various reasoning length constraints for $P_1$ for GPT-4.1-nano (left) and Qwen-2.5 $72$B (right).
Overall, the models seem to be better calibrated for $p\in[0, 0.5]$ while showing a similar bias as Llama-3.1 $70$B (compare to \Cref{fig:cot_llama}) across different reasoning lengths.

\begin{figure}[!ht]
  \centering
  \includegraphics[width=0.49\linewidth]{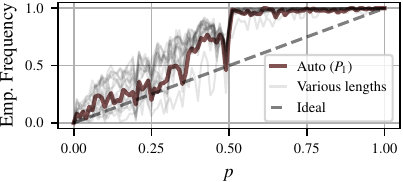}
  \includegraphics[width=0.49\linewidth]{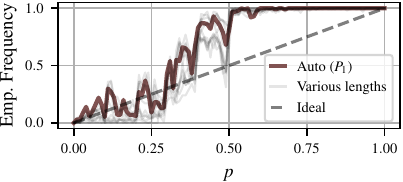}

  \caption{Calibration for various reasoning length constraints in direct sampling: GPT-4.1-nano (left) and Qwen-2.5 $72$B (right) for $P_1$.}
  \label{fig:calibration-p1-gpt-qwen}
\end{figure}

\Cref{fig:direct_others} shows the calibration plots of direct sampling for GPT-4.1-nano, Qwen-2.5 72B, and DeepSeekV3.
The corresponding STVD scores are shown in \Cref{tab:direct-sampling-vs-vrs}.
The corresponding VRS calibration plots are shown in \Cref{fig:vrs_others}.

\begin{figure}[!ht]
    \centering
    \includegraphics[width=0.49\textwidth]{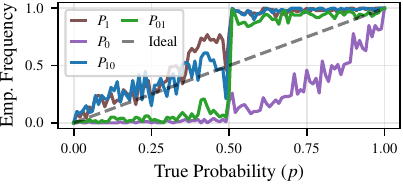}
    \includegraphics[width=0.49\textwidth]{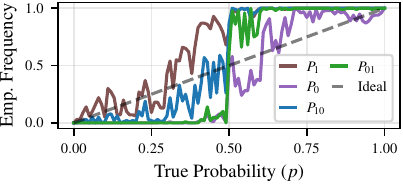}
    \includegraphics[width=0.49\textwidth]{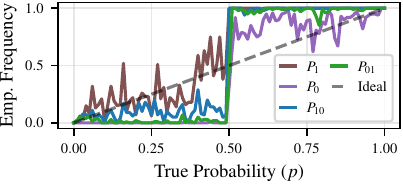}

  \caption{Calibration of direct sampling for GPT-4.1-nano (top left), Qwen-2.5 $72$B (top right), and DeepSeekV3 (bottom).}
  \label{fig:direct_others}
\end{figure}

\newpage
\subsection{VRS Calibration Plots for Other LLMs} \label{app:sec:vrs_calibration_others}

In \Cref{fig:vrs_others} we provide calibration plots of VRS for GPT-4.1-nano (top left), Qwen-2.5 $72$B (top right), and DeepSeekV3 (bottom).
In \Cref{tab:direct-sampling-vs-vrs} we provide the corresponding STVD.
We find that the smaller GPT-4.1-nano performs worse than the other two larger models.
However, the plots tell the same story as the ones in the main text.

\begin{figure}[!ht]
    \centering
    \includegraphics[width=0.49\textwidth]{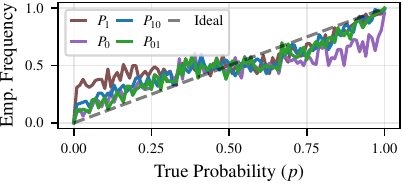}
    \includegraphics[width=0.49\textwidth]{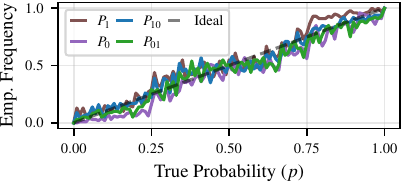}
    \includegraphics[width=0.49\textwidth]{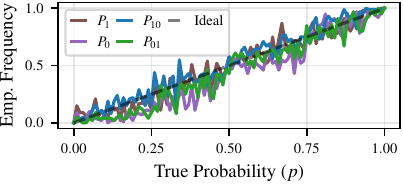}

  \caption{Calibration of VRS for GPT-4.1-nano (top left), Qwen-2.5 $72$B (top right), and DeepSeekV3 (bottom).}
  \label{fig:vrs_others}
\end{figure}

\subsection{VRS with Constant $M$ Instruction}

\Cref{fig:vrs_M_calibration} shows the calibration plot for VRS-M, which is an ablation of VRS by providing the model with the description on how $M$ is computed.
The corresponding STVD scores can be found in \Cref{tab:ablation-llama}.

\begin{figure}[!ht]
  \centering
  \includegraphics[width=0.49\linewidth]{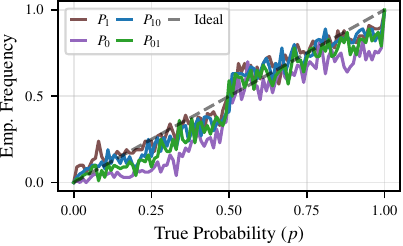}

  \caption{Calibration of VRS-M (Llama-3.1) }
  \label{fig:vrs_M_calibration}
\end{figure}

\newpage
\subsection{Full Results for Prompt Ablations}
\label{app:sec:full_prompt_ablation}

We provide the full table corresponding to \Cref{tab:ablation-llama} that includes standard deviation over 5 independent trials for each method.
The overall observation is the same as in \Cref{sec:magic}.
The low standard deviations indicate the observed effects are stable across runs.

\begin{table}[!h]
    \scriptsize
    \setlength{\tabcolsep}{2pt}
    \renewcommand{\arraystretch}{0.9}
    \caption{Ablation STVD with Standard Deviation($\downarrow$)}
    \centering
    \vskip-0.5em
    \label{tab:ablation-llama-full}
    \centering
    \begin{tabular}{l|ccccc}
        \toprule
        Method & $P_1$  & $P_0$  & $P_{10}$ & $P_{01}$ & mean \\ 
        \midrule
        Direct                         
        & $25.36 \pm 0.13$  & $24.79 \pm 0.49$  & $12.50 \pm 0.35$    & $16.59 \pm 0.45$    & \cellcolor{Gray}{$19.81 \pm 0.15$} \\ 
        VRS                            
        & $5.73 \pm 0.29$   & $7.64 \pm 0.23$   & $5.36 \pm 0.10$     & $5.60 \pm 0.10$     & \cellcolor{Gray}{$6.08 \pm 0.12$}  \\ 
        \arrayrulecolor{black!30}\midrule
        VRS‑simple                     
        & $15.53 \pm 0.30$  & $6.97 \pm 0.34$   & $13.99 \pm 0.16$    & $10.95 \pm 0.32$    & \cellcolor{Gray}{$11.86 \pm 0.13$} \\ 
        VRS‑simple-M      
        & $11.19 \pm 0.36$  & $29.02 \pm 0.46$  & $14.08 \pm 0.68$    & $19.49 \pm 0.74$    & \cellcolor{Gray}{$18.45 \pm 0.35$} \\ 
        VRS-M             
        & $5.17 \pm 0.28$   & $11.40 \pm 0.42$  & $5.59 \pm 0.23$     & $7.28 \pm 0.22$     & \cellcolor{Gray}{$7.36 \pm 0.14$}  \\ 
        \bottomrule
    \end{tabular}
    \vskip-1em %
\end{table}

\subsection{Ablations for Other LLMs}

We provide the full table corresponding to \Cref{tab:ablation-llama} that includes all four LLMs.
The overall observation is the same as in \Cref{sec:magic}.
Adding the $M$-instruction leads to degradations of the sampling performance for both VRS-simple and VRS.
Additionally, except for GPT-4.1-nano, for the three other LLMs, VRS-simple improve the performance on average, indicating that the VRS prompt can explain some of the improvement, but the effectiveness of the same prompt varies across LLMs.
Therefore, the general improvement of VRS is likely to come from the algorithm itself, rather then prompt phrasing alone.

\begin{table}[!h]
    \scriptsize
    \setlength{\tabcolsep}{2pt}
    \renewcommand{\arraystretch}{0.9}
    \caption{Ablation STVD for all models($\downarrow$)}
    \centering
    \newcolumntype{C}{>{\raggedleft\arraybackslash}X}
    \begin{tabularx}{\textwidth}{@{}l | CCCCC | CCCCC | CCCCC | CCCCC @{}}
    \toprule
    \multirow{2}{*}{Method\;}
      & \multicolumn{5}{c|}{Llama-3.1 $70$B}
      & \multicolumn{5}{c|}{GPT-4.1-nano}
      & \multicolumn{5}{c|}{DeepSeekV3}
      & \multicolumn{5}{c}{Qwen-2.5 72B} \\[0.25em]
      & $P_1$ & $P_0$ & $P_{10}$ & $P_{01}$ & mean
      & $P_1$ & $P_0$ & $P_{10}$ & $P_{01}$ & mean
      & $P_1$ & $P_0$ & $P_{10}$ & $P_{01}$ & mean
      & $P_1$ & $P_0$ & $P_{10}$ & $P_{01}$ & mean \\
    \midrule
    Direct
      & 25.36 & 24.79 & \underline{12.50} & 16.59 & \cellcolor{Gray}{19.81}
      & 17.87 & 30.23 & \underline{16.63} & 19.24 & \cellcolor{Gray}{21.00}
      & \underline{17.76} & 19.39 & 20.78 & 23.26 & \cellcolor{Gray}{20.30}
      & 20.73 & \underline{18.72} & 19.00 & 22.64 & \cellcolor{Gray}{20.27} \\
    VRS
      &  5.73 &  7.64 &  \underline{5.36} &  5.60 &  \cellcolor{Gray}{6.08}
      & 12.96 & 13.06 &  9.50 &  \underline{8.46} &  \cellcolor{Gray}{11.00}
      &  5.34 &  9.06 &  \underline{5.29} &  6.94 &  \cellcolor{Gray}{6.66}
      &  5.93 &  6.35 &  \underline{4.49} &  5.12 &  \cellcolor{Gray}{5.47} \\
    \arrayrulecolor{black!30}\midrule
    VRS‑simple                     
      & 15.53  & \underline{6.97}   & 13.99    & 10.95    & \cellcolor{Gray}{11.86}  
      & 36.83  & \underline{22.05}   & 25.56    & 22.29    & \cellcolor{Gray}{26.68} 
      & \underline{8.23}  & 20.25   & 11.19    & 15.59    & \cellcolor{Gray}{13.82}  
      & 13.55  & \underline{8.21}   & 10.69    & 8.69    & \cellcolor{Gray}{10.28} \\ 
    VRS‑simple-M      
      & \underline{11.19}  & 29.02  & 14.08    & 19.49    & \cellcolor{Gray}{18.45}  
      & 29.08  & \underline{20.68}  & 29.97    & 28.23    & \cellcolor{Gray}{26.99}
      & \underline{18.25}  & 30.43  & 20.82    & 28.49    & \cellcolor{Gray}{24.50}  
      & 14.43  & \underline{9.48}  & 12.41    & 10.13    & \cellcolor{Gray}{11.61} \\
    VRS-M             
      & \underline{5.17}   & 11.40  & 5.59     & 7.28     & \cellcolor{Gray}{7.36}   
      & \underline{10.3}   & 12.29  & 12.39     & 12.77     & \cellcolor{Gray}{11.94}   
      & 8.79   & 14.33  & \underline{8.73}     & 11.26     & \cellcolor{Gray}{10.78}   
      & 8.92   & 9.66  & \underline{8.78}     & 9.55     & \cellcolor{Gray}{9.23}  \\
    \bottomrule
    \end{tabularx}
    \label{tab:ablation-all-models}
\end{table}

\newpage
\section{Additional Results for Binomial Distributions} 
\label{app:sec:sampling-binomial}

In this section, we extend our results in \label{app:sec:smapling} to Binomial distributions.
The binomial distribution $P$ has parameters $p, n$ and the probability mass function is given by
\begin{align*}
    P(k) = \binom{n}{k} p^k(1-p)^{n-k}.
\end{align*}

For two Binomial distributions $P$ and $Q$ with parameters $n,p$ and $n,q$, respectively, we have
\begin{align}
    M \coloneqq \max_{k\in\{0, \dots, n\}} \left\lbrace\left(\frac{p}{q}\right)^k\left(\frac{1-p}{1-q}\right)^{n-k}\right\rbrace.
    \label{eq:m-binomial}
\end{align}

In the following \Cref{app:sec:binomial-theory}, we show that both \Cref{prop:biased-acceptance-prob} and \Cref{prop:half-biased-acceptance-prob} generalize to Binomial distributions (as well as other distributions with discrete state spaces, same distributional form of $P$ and $Q$, and a single outcome that maximizes the acceptance ratio) under the assumption that the LLM arrives at the right decision if the acceptance ratio is $1$ (which we observe to be true for the Bernoulli case). 
Additionally, the bound presented in \Cref{prop:comparison-half-biased-direct-sapling} also generalizes under the assumption that direct sampling yields a distribution within TV distance $c$ of the target.
In \Cref{app:sec:binomial-empirical}, we verify the theoretical statements empirically and we observe that VRS can generate samples with smaller TV error than direct sampling under several settings.

\subsection{Extending the Theoretical Analysis}
\label{app:sec:binomial-theory}

We will establish a worst-case bound in terms of the TV distance between two Binomial distributions $P$ and $Q$ for the case that the acceptance probability is biased for every $k$.
\begin{proposition}
    \label{prop:biased-acceptance-prob-binomial}
    Let $P(k), Q(k)$ be Binomial distributions with parameters $n, p$ and $n, q$, respectively, where $P$ is the target distribution that we want to sample from with rejection sampling and $Q(k)$ is the proposal distribution.
    Further, let $\tilde P(k)$ denote the Binomial distribution resulting from a biased accept/reject step where we assume that the acceptance probability $\tilde A(k)$ is biased as $\tilde A(k) = A(k) + e(k)$ where $|e(k)| \leq c \in \mathbb R$.
    Then, 
    \begin{align*}
        D_\mathrm{TV}(\tilde P, P) \leq \frac{Mc}{1 - Mc},
    \end{align*}
    where $M$ is defined as in \Cref{eq:m-binomial}.
\end{proposition}

\begin{proof}
Let $B\coloneqq \{0, \dots, n\}$.
Note that $\alpha = \sum_{k \in B} Q(k)A(k) = \frac{1}{M}$.
Assuming a biased acceptance probability $\tilde A(x)$, we can split the resulting acceptance rate into
\begin{align*}
    \tilde \alpha
    = \sum_{k \in B} Q(k)(A(k) + e(k))
    = \underbrace{\sum_{k \in B} Q(k)A(k)}_{=\alpha} 
    + \underbrace{\sum_{k \in B} Q(k)e(k)}_{\eqqcolon \delta},
\end{align*}
where $\alpha$ corresponds to the unbiased acceptance rate and $\delta$ denotes the deviation from it.
Note that 
\begin{align*}
    |\delta|
    = \sum_{k \in B} Q(k)|e(k)|
    \leq \sum_{k \in B} Q(k)c
    =c
    ,
\end{align*} 
and, therefore, $\tilde \alpha = \alpha + \delta \geq \alpha - c \geq 0$.
We assume that $0 \leq \tilde A(k) \leq 1$.
Let $\tilde A$ denote the acceptance event.
We denote the resulting law of the accepted samples by $\tilde P$.
\begin{align*}
    \mathbb P(K = k \mid \tilde A)
    = \frac{\mathbb P(K=k, \tilde A)}{\mathbb P(\tilde A)}
    = \frac{Q(k)\tilde A(k)}{\tilde \alpha}
    \eqqcolon  \tilde P(k).
\end{align*}
We can now upper-bound a term in the TV distance as
\begin{align*}
    |\tilde P(k) - P(k)|
    =
    \left\lvert 
    \frac{Q(k)\tilde A(k)}{\tilde \alpha}
    -
    \frac{Q(k)A(k)}{\alpha}
    \right\rvert
    \leq
    Q(k) \frac{c}{\alpha-c}
    \left(
    1 + \frac{A(k)}{\alpha}
    \right) 
    \nonumber,
\end{align*}
which follows as in \Cref{prop:biased-acceptance-prob}.
For the full TV distance, we get
\begin{align*}
    D_\mathrm{TV}(\tilde P, P)
    &=
    \frac 1 2 
    \sum_{k \in B}
    \frac{c}{\alpha-c}
    Q(k)
    \left(
    1 + \frac{A(k)}{\alpha}
    \right)\\
    &= \frac 1 2 \frac{c}{\alpha-c}
    \left(
    \sum_{k \in B}
    Q(k)
    + \frac 1 \alpha \sum_{k \in B} Q(k) A(k)
    \right)\\
    &= \frac{c}{\alpha-c}\\
    &= \frac{Mc}{1 - Mc}.
\end{align*}
\end{proof}
Note that we can extend \Cref{prop:biased-acceptance-prob-binomial} to an arbitrary distribution with finite state space. 

In the following, we will assume that if $A(k^\ast) = 1$ there is no bias, i.e., no error ($A(k^\ast)=1\Rightarrow e(k^\ast)=0$). 
Additionally, we assume that there exists only one $k^\ast$ which achieves $A(k^\ast)=1$.
For all other $k \in B\backslash \{k^\ast\}$, we have $|e(k)| \leq c$.

\begin{proposition}
    \label{prop:half-biased-acceptance-prob-binomial}
    Let $P(k), Q(k)$ be Binomial distributions with parameters $n,p$ and $n,q$, respectively, where $P(k)$ is the target distribution that we want to sample from with rejection sampling and $Q(k)$ is the proposal distribution.
    Further, let $\tilde P(k)$ denote the Bernoulli distribution resulting from a biased accept/reject step where we assume that the acceptance probability $\tilde A(k)$ is biased with an additive error $e(k)$ where $|e(k)| \leq c \in \mathbb R$ as 
    \begin{align*}
        \tilde A(k) = 
        \begin{cases}
            A(k) + e(k) &\text{if} \; A(k) < 1\\
            A(k) &\text{if} \; A(k) = 1
        \end{cases},
    \end{align*}
    where $|e(x)| \leq c \in \mathbb R$.
    Further, we assume that there exists only one $k^\ast$, such that $A(k^\ast)=1$.
    Then, 
    \begin{align*}
        D_\mathrm{TV}(\tilde P, P) 
        \leq \frac{Mc\bar{q}_{k^\ast}}{1 - Mc\bar{q}_{k^\ast}}
    \end{align*}
    where $M$ is defined as in \Cref{eq:m-binomial} and $\bar{q}_{k^\ast}=1-Q(k^\ast)$.
\end{proposition}

\begin{proof}
    Let $k^\ast$ be chosen such that $A(k^\ast)=1\Rightarrow e(k^\ast)=0$.
    The resulting acceptance rate $\tilde \alpha$ can be states as follows
    \begin{align*}
        \tilde \alpha
        = \sum_{k \in B} Q(k)(A(k) + e(k))
        = \underbrace{\sum_{k \in B \vphantom{\{k^\ast\}}} Q(k)A(k)}_{=\alpha} 
        + \underbrace{\sum_{k \in B\backslash \{k^\ast\}}Q(k)e(k)}_{\eqqcolon \delta}.
    \end{align*}
    For any $k$, we have 
    \begin{align*}
        |\tilde P(k) - P(k)|
        &= 
        Q(k)
        \left\lvert 
        \frac{e(k)\alpha - A(k)\delta}{\alpha\tilde\alpha}
        \right\rvert
    \end{align*}
    For $k^\ast$, we have $e(k^\ast)=0,A(k^\ast)=1$ and, therefore,
    \begin{align*}
        |\tilde P(k^\ast) - P(k^\ast)|
        &= 
        Q(k^\ast)
        \left\lvert 
        \frac{e(k^\ast)\alpha - A(k^\ast)\delta}{\alpha\tilde\alpha}
        \right\rvert
        =
        Q(k^\ast)
        \frac{|\delta|}{\alpha\tilde\alpha}
    \end{align*}
    For $k\in B\backslash\{k^\ast\}$ we get
    \begin{align*}
        \sum_{k\in B\backslash\{k^\ast\}} |\tilde P(k) - P(k)|
        &= 
        \sum_{k\in B\backslash\{k^\ast\}}Q(k)
        \left\lvert 
        \frac{e(k)\alpha - A(k)\delta}{\alpha\tilde\alpha}
        \right\rvert \\
        &\leq
        \sum_{k\in B\backslash\{k^\ast\}}
        Q(k)
        \frac{|e(k)|\alpha + A(k)|\delta|}{\alpha\tilde\alpha}\\
        &=
        \frac{1}{\alpha\tilde\alpha}
        \left(
        \alpha
        \sum_{k\in B\backslash\{k^\ast\}}
        Q(k)|e(k)|
        +
        |\delta|
        \sum_{k\in B\backslash\{k^\ast\}}
        Q(k)A(k)
        \right)
    \end{align*}
    Summing over all $k\in B$, we get
    \begin{align*}
        2D_\mathrm{TV}(\tilde P, P)
        &= \sum_{k\in B} |\tilde P(k) - P(k)|\\
        &= |\tilde P(k^\ast) - P(k^\ast)| + \sum_{k\in B\backslash\{k^\ast\}} |\tilde P(k) - P(k)| \\
        &\leq
        Q(k^\ast)
        \frac{|\delta|}{\alpha\tilde\alpha}
        +
        \frac{1}{\alpha\tilde\alpha}
        \left(
        \alpha
        \sum_{k\in B\backslash\{k^\ast\}}
        Q(k)|e(k)|
        +
        |\delta|
        \sum_{k\in B\backslash\{k^\ast\}}
        Q(k)A(k)
        \right)\\
        &=  
        \frac{1}{\alpha\tilde\alpha}
        \left(
        \alpha
        \sum_{k\in B\backslash\{k^\ast\}}
        Q(k)|e(k)|
        +
        |\delta|
        \left(
        Q(k^\ast) +
        \sum_{k\in B\backslash\{k^\ast\}}
        Q(k)A(k)
        \right)
        \right)\\
        &=  
        \frac{1}{\alpha\tilde\alpha}
        \left(
        \alpha
        \sum_{k\in B\backslash\{k^\ast\}}
        Q(k)|e(k)|
        +
        |\delta|
        \alpha
        \right)\\
        &=  
        \frac{2}{\alpha\tilde\alpha}
         \alpha|\delta|\\
        &\leq
        \frac{2c\bar{q}_{k^\ast}}{\tilde\alpha}
        ,
    \end{align*}
    using
    \begin{align*}
        \sum_{k\in B\backslash\{k^\ast\}}
        Q(k)A(k) 
        =
        \alpha - Q(k^\ast),
    \end{align*}
    and the upper-bound on $|\delta|$
    \begin{align*}
        |\delta| = \sum_{k \in B\backslash \{k^\ast\}}Q(k)|e(k)|\leq c \sum_{k \in B\backslash \{k^\ast\}}Q(k) = c (1-Q(k^\ast))\eqqcolon c\bar{q}_{k^\ast}.
    \end{align*}
    We have
    \begin{align*}
        \tilde \alpha \geq \alpha - |\delta| \geq \alpha - c\bar{q}_{k^\ast}.
    \end{align*}
    We can rewrite this bound in terms of $M$ as follows.
    \begin{align*}
        D_\mathrm{TV}(\tilde P, P)
        &\leq
        \frac{c\bar{q}_{k^\ast}}{\tilde\alpha}
        =\frac{c\bar{q}_{k^\ast}}{\alpha - c\bar{q}_{k^\ast}}
        =\frac{Mc\bar{q}_{k^\ast}}{1 - Mc\bar{q}_{k^\ast}}
    \end{align*}
\end{proof}

Comparing the bounds in this section to the bounds in the main text, we note that they are similar.
\begin{remark}[Comparing the bounds to the main result.]
    \label{remark:proposition-general-distributions}
    While the bound in \Cref{prop:biased-acceptance-prob-binomial} is the same as \Cref{prop:biased-acceptance-prob}, the bound in \Cref{prop:half-biased-acceptance-prob-binomial} is a generalization of the bound in \Cref{prop:half-biased-acceptance-prob} by replacing $Q(\hat x)=1-Q(x^\ast)$ by $1-Q(k^\ast)$ (note that there are generally more than two $k$).
\end{remark}

With the derivations in this section, we can also generalize the bounds to other distributions.
\begin{remark}(Generalizing the bounds.)
    Note that we can extend \Cref{prop:biased-acceptance-prob-binomial} to other distributions with discrete state-space.
    Also, we can extend \Cref{prop:half-biased-acceptance-prob-binomial} to other distributions with discrete state space, under the assumption that $P$ and $Q$ are the same distribution and that there is only a single $k^\ast$ that achieves the condition $A(k^\ast)=1$.
    The latter condition is a \textit{worst case} condition, i.e., if there is more than one $k^\ast$ with $A(k^\ast)=1$, the bound presented in \Cref{prop:comparison-half-biased-direct-sapling-binomial} is looser.
\end{remark}

In the following corollary we compare the worst case bounds derived in \Cref{prop:half-biased-acceptance-prob-binomial} to a general error in TV distance of $c\in \R$.
\begin{corollary}
    \label{prop:comparison-half-biased-direct-sapling-binomial}
    Let $P(k), Q(k)$ be Binomial distributions with parameters $n,p$ and $n,q$, respectively, where $P(k)$ is the target distribution that we want to sample from with rejection sampling and $Q(k)$ is the proposal distribution.
    Further, let $\tilde P(k)$ denote the Binomial distribution resulting from a biased accept/reject step where we assume that the acceptance probability $\tilde A(k)$ is biased with an additive error $e(k)$, where $|e(k)| \leq c \in \mathbb R$, if $A(k)<1$, see \Cref{prop:half-biased-acceptance-prob-binomial}.
    Additionally, let $\bar P(k)$ be the Binomial distribution for which we assume
    \begin{align*}
        D_\mathrm{TV}(\bar P, P) \leq c.
    \end{align*}
    Then, the worst-case total variation error of half-biased rejection sampling is smaller than that of direct sampling if and only if
    \begin{align*}
        D_\mathrm{TV}(\tilde P, P)<D_\mathrm{TV}(\bar P, P)
        \Longleftrightarrow 
        \frac{\bar{q}_{k^\ast}}{\alpha}(1+c) \leq 1
        \Longleftrightarrow
        c < \frac{1}{\bar{q}_{k^\ast}M} -1.
    \end{align*}
    where $M$ is defined as in \Cref{eq:m-binomial} and $\bar{q}_{k^\ast}=1-Q(k^\ast)$.
\end{corollary}
\begin{proof}
Let
\begin{align}
    D_\mathrm{TV}(\bar P, P)
    = |e(x)|
    \leq c,
\end{align}
and
\begin{align}
    D_\mathrm{TV}(\tilde P, P) 
    \leq 
    \frac{Mc\bar{q}_{k^\ast}}{1 - Mc\bar{q}_{k^\ast}},
\end{align}
following the bound in \Cref{prop:half-biased-acceptance-prob-binomial}.
Then,
\begin{align*}
    \frac{Mc\bar{q}_{k^\ast}}{1 - Mc\bar{q}_{k^\ast}}
    < c
    \Longleftrightarrow 
    \frac{\bar{q}_{k^\ast}}{\alpha}(1+c) \leq 1
    \Longleftrightarrow
    c < \frac{1}{\bar{q}_{k^\ast}M} -1.
\end{align*}
\end{proof}

In \Cref{prop:comparison-half-biased-direct-sapling-binomial}, we assume $D_\mathrm{TV}(\bar P, P) \leq c$ instead of an additive error on the parameter of the Bernoulli distribution as in \Cref{prop:comparison-half-biased-direct-sapling}.
\begin{remark}(Assumptions in \Cref{prop:comparison-half-biased-direct-sapling-binomial}.)
    In \Cref{prop:comparison-half-biased-direct-sapling-binomial} we assume that the distribution $\bar P$ (eventually sampled from the LLM) is biased by $D_\mathrm{TV}(\bar P, P) \leq c$ which is in contrast to \Cref{prop:comparison-half-biased-direct-sapling} where we assume an \textit{additive} error on the parameter $\bar p$, i.e., $\bar p = p + e, |e| \leq c$.
    This is due to the fact that in the Bernoulli case (\Cref{prop:comparison-half-biased-direct-sapling}), we have a clear picture on the functional form of the error (an additive shift on the parameter).
    However, in the Binomial case (and in the case of other discrete distributions), we do not have an idea on how the error comes about, instead we assume an error budget of $c\in\R^+$ measured in TV distance.
    For more informed bounds, future work might investigate the structure of samples, given by LLMs, for other distributions. 
\end{remark}

\subsection{Empirical Evaluation of VRS on Binomial Distributions}
\label{app:sec:binomial-empirical}

We evaluated VRS v.s. direct sampling on $Q$ being $\mathrm{Binomial}(n, 0.5)$ and $P$ being $\mathrm{Binomial}(n, p)$ for $n \in \{ 1,2,3,4,5 \}$, across 11 values of $p \in \{0.0, 0.1, ..., 1.0 \}$  (unlike in the main text which is across 101 values of $p$). Resulting STVD $\downarrow$ (summed over all $p$ values) for Llama-3.1 70B is showed in \Cref{tab:binomial}. We can see that with larger $n$, i.e., the distribution being more complex, the STVD for both direct sampling and VRS is getting larger. Nevertheless, the VRS still results in much smaller STVD than direct sampling.

\begin{table}[!h]
    \scriptsize
    \setlength{\tabcolsep}{2pt}
    \renewcommand{\arraystretch}{0.9}
    \caption{STVD ($\downarrow$) for Binomial distributions.}
    \centering
    \vskip-0.5em
    \label{tab:binomial}
    \centering
    \begin{tabular}{l|ccccc}
        \toprule
        Method & $n=1$  & $n=2$  & $n=3$ & $n=4$ & $n=5$ \\ 
        \midrule
        Direct                         
        & $1.32$  & $2.50$  & $3.89$    & $3.78$    & $4.08$ \\ 
        VRS                            
        & \underline{$0.52$}   & \underline{$1.19$}   & \underline{$2.23$}     & \underline{$2.46$}     & \underline{$2.74$}  \\ 
        \bottomrule
    \end{tabular}
    \vskip-1em %
\end{table}

\newpage
\section{Experiment Setup Details}
In this section, we provide the pseudocode algorithms for VRS in the setting of Bernoulli, and the details for the computational resources used for our experiments.

\subsection{Pseudocode for VRS with Bernoulli}
\label{app:sec:algo}

\begin{algorithm}[H]
\caption{VRS for Bernoulli}\label{alg:vrs}
Given: language descriptions for the target $P(x;p)$, language descriptions for the proposal $Q(x;0.5)$, number of samples $N$;\;\\

\textit{samples} = [];\;

\For{$n=1,\cdots,N$}{
    \Repeat{$\mathrm{resp} = \mathrm{T}$ 
        \Comment{$\mathrm{T}$ for `Accept', $\mathrm{F}$ for `Reject'}}{
        $\mathrm{s} \sim \mathrm{Bern}(0.5)$;\; 
        \Comment{Python Sampler}
        
        $\mathrm{resp}= \mathrm{LLM}(P,Q,s,\text{template})$;
        \Comment{LLM API call}
    }

    \textit{samples}.append($s$);\;
}

\Return{\textit{samples}}
\end{algorithm}

\subsection{Computational Resources} \label{app:sec:computational-resources}
We host open-source models (e.g., Llama-3.1 70B and Qwen-2.5 72B) using the vLLM~\citep{kwon2023efficient} framework on 4 Nvidia A100 GPUs or 4 Nvidia H100 GPUs.
Generating $100$ samples from the LLMs takes approximately $25$ seconds in our setup.

\paragraph{Licenses}
For the open-source models, we use Llama-3.1 (LLAMA 3.1 COMMUNITY LICENSE AGREEMENT), DeepSeekV3 (DEEPSEEK LICENSE AGREEMENT), and Qwen-2.5 (Qwen LICENSE AGREEMENT). 
We buy the service from OpenAI to use GPT-4.1-nano.

\newpage
\section{Broader Discussions} 
\label{app:sec:discussions}

We do recognize the idea of using natural language to define distributions and using LLMs as samplers for such distributions is very much at its early stage, and it is not fully recognized by many at the moment. However, as LLMs are playing an increasingly larger role in computing, their ability to generate faithful samples has to be studied and improved, which is closely tied to topics like fairness and safety.
We are not hoping to solve this problem entirely in a single work, instead, we want to use the simplest distribution, i.e., the Bernoulli, to raise the awareness and thoroughly study this overlooked problem, and to investigate possible fixes.

During the project, we came across many interesting discussions, which we believe are worth sharing with the reader in this section. Some discussions are formatted in Q\&A below.

\subsection{Why not ask the LLM to call an external sampler?}
\looseness=-1
We fully agree that when an external tool, e.g., Python-based sampler, is available and callable, it can be used to generate unbiased samples efficiently. However, our work is not positioned as a replacement for such tool-assisted approaches. Instead, we focus on a different and complementary question:

\emph{``Can large language models, operating solely in natural language, simulate stochastic processes faithfully without access to external tools or code?''}

This question arises from realistic and increasingly common LLM deployment scenarios where: LLMs act as autonomous \emph{agents} expected to make decisions involving chance (e.g., tie-breaking, randomized planning), interfaces are purely natural language, with no tool execution available or permitted, even when tools are available, their invocation may compromise interpretability, modularity, or security (e.g., sandboxed educational or fairness-sensitive settings).

In such settings, we are left with the LLM itself as the only accessible computational mechanism. The goal of VRS is to explore whether LLMs can simulate stochasticity internally, using only structured prompting. This is not about generating perfect randomness, but about understanding and improving the LLM's native stochastic behavior, an ability that is underexplored but increasingly relevant as LLMs are deployed as autonomous agents and decision-makers.

\textbf{More broadly}, VRS is not just a sampling method, \emph{it is a case study in how to build and analyze algorithmic prompts in a principled way.} Rather than relying on heuristic prompt engineering, we derive a prompt-based implementation of rejection sampling and provide formal theoretical guarantees for its behavior under model bias. This methodology, i.e., analyzing prompts through the lens of classical algorithms and error bounds, offers a new paradigm for prompt design that bridges empirical performance with formal analysis.

\textbf{The analogy here is research on LLMs' math capabilities}: although it's trivial to solve math problems by calling a calculator, we still study whether LLMs can reason through equations in language, because it tells us something fundamental about their internal representations and limitations. In the same spirit, VRS asks whether LLMs can simulate randomness themselves, not by outsourcing it, but by verbalizing and executing probabilistic logic in language.

\subsection{How practical is it to use LLMs as samplers?}
This work represents an early step toward enabling and understanding more advanced verbalized probabilistic algorithms. The setup in this paper, focused on Bernoulli distributions, is intentionally simple, not because the problem is trivial, but because it offers a concrete foundation to study a deep and emerging capability: \emph{probabilistic reasoning in natural language}.

We fully agree that in classical settings, sampling should rely on tools, which offer well-defined guarantees. However, our motivation arises from realistic and increasingly common LLM deployment scenarios where: LLMs act as autonomous \emph{agents} expected to make decisions involving chance (e.g., tie-breaking, randomized planning), interfaces are purely natural language, with no tool execution available or permitted, even when tools are available, their invocation may compromise interpretability, modularity, or security (e.g., sandboxed educational or fairness-sensitive settings).

While invoking an external tool is technically sound, relying on tool use as a universal solution may not be realistic or sufficient. Many LLM-based systems already operate in tool-free settings, and users (often unknowingly) trust the LLM's verbal reasoning to simulate stochasticity. We view VRS not as a replacement for principled samplers, but as a practical, language-native safeguard that significantly reduces sampling error in such environments.

In the long term, this raises several foundational questions: How can we understand and control stochastic behavior in LLMs through reasoning? What are the algorithmic abstractions that can be embedded within language? How robust are these probabilistic reasoning? These are open and important challenges. LLMs currently do not offer distributional guarantees. But if we want to reason about and improve their probabilistic reasoning capabilities, we must begin somewhere, and this work aims to provide that conceptual and empirical example.

Finally, while VRS currently assumes access to an explicit target distribution $P(x)$, a compelling future direction is to extend this framework to implicitly defined distributions, where $P(x)$ is only described semantically or via examples, rather than analytically. In such cases, tool use may no longer help, as there is no closed-form function to evaluate. Interestingly, our findings in \Cref{sec:the_gap} show that LLMs are often better at recognizing whether a sample fits a distribution than at generating it. This discriminator-like ability could inspire new verbalized sampling paradigms, perhaps analogous to adversarial models like GANs, where judgment about sample quality is used to refine generative behavior.

In short, we agree that faithful sampling from LLMs is a difficult and unresolved problem, but it is precisely because it is difficult, and increasingly relevant, that we believe it deserves attention now.

\subsection{Does VRS influence or align the LLM's internal logits?}
This question gets to the heart of why we believe VRS is both interesting and distinct from other approaches.

\paragraph{VRS does not modify or align logits.}
Unlike direct sampling, where the LLM is prompted to output ``0'' or ``1'' and the resulting logits directly correlate with the sample distribution, VRS prompts the model to sample a decision to accept (\texttt{T}) or reject (\texttt{F}) the proposed sample. Thus, VRS operates over a different output space and does not influence or depend on the logits used in direct sampling.

While VRS does not try to align the model's internal probabilities, it does yield samples that better match the target distribution. As shown in \Cref{sec:algo-analysis}, this is not due to logit manipulation but to the algorithmic structure imposed by the prompt. This distinguishes VRS from tool-calling (which delegates randomness to external code) and from methods requiring internal model access.

\paragraph{VRS intentionally accepts biased logits and works around them.}
In contrast to methods that modify LLM behavior by adjusting weights (via fine-tuning) or prompts (via prompt engineering), VRS embraces the fact that the logits are biased and uses a probabilistic mechanism (executed by LLMs) to correct for it, without needing access to or control over the internal distributions. This is conceptually aligned with classical sampling theory: for decades, rejection sampling and similar methods have been used to generate unbiased samples from biased sources. VRS brings this idea to the language interface of LLMs.

\subsection{Is VRS computationally intensive?}
VRS incurs higher computational cost than direct sampling, and this is both expected and meaningful. The key point, however, is that \emph{VRS is an algorithm that operates in a fundamentally different space: the informal, natural language domain}.

In classical settings, sampling algorithms like rejection sampling or MCMC are implemented in formal programming environments (e.g., Python). These are efficient, but they also assume the user has already formalized the problem, encoded it in code, and specified exact parameters (e.g., manually calculated the acceptance probability). In contrast, VRS addresses a very different use case: the user specifies the problem informally in natural language, and the computation itself happens within the language space.

This shift, what we might call \textbf{verbalized computing}, has important implications. While inference in this space may be more computationally expensive, it is also more accessible. A user can invoke VRS by simply describing a desired distribution in plain text. There is no need to write or call code, craft sampling functions, or define rejection logic programmatically. This convenience is not free, but it lowers the barrier of entry for users who otherwise would not engage with formal stochastic computation. Viewed this way, \textbf{the ``cost'' of VRS is offset by the elimination of the cost of formalization, which is often unacknowledged in computational models, yet a dominant factor in practice.}

Moreover, \emph{we believe this reframes how we think about computational complexity in the LLM era}. Traditional complexity theory does not account for the cost of formalization, and directly starts with an already formalized problem to analyze its complexity. In LLM-based systems, where both problem specification and computation occur in natural language, the relevant complexity includes the effort saved by not formalizing the task, and that’s where natural-language-based algorithms like VRS shines.

We also want to clarify that the actual sampling overhead of VRS is bounded and modest in the Bernoulli case. Since we use a symmetric proposal distribution (i.e., $q = 0.5$), the worst-case acceptance probability is $0.5$, meaning we expect to draw twice as many proposals as needed samples in the worst case. This remains in the same complexity class: generating $n$ accepted samples via VRS still takes $\mathcal{O}(n)$ calls to the model.

\subsection{Does using a programmatic sampler for the proposal $Q$ weaken the result?}

In our implementation, the proposed sample $s\sim Q$ (where $Q=\mathrm{Bern}(0.5)$) is generated programmatically. This can be done using standard libraries (e.g., Python's random module) or deterministically, for example by submitting half the prompts with $s=1$ and the other half with $s=0$. 
Nevertheless, we believe it does not weaken the results for the following reasons:

\paragraph{The LLM's stochastic behavior is still central.} 
A crucial step in our method is whether the LLM can reliably carry out the accept/reject decision in a probabilistic way, purely through reasoning over language. This is precisely where LLMs have struggled in direct sampling, and where VRS shows a surprising improvement. The fact that the input $s$ is sampled externally does not diminish this core finding.

\paragraph{Programmatic randomness is standard in computational sampling.}
Virtually all stochastic processes in simulations or machine learning, whether it's sampling from a Gaussian, Bernoulli, or any complex distribution, ultimately rely on deterministic procedures to generate pseudo-randomness. For example, diffusion models begin with noise sampled from a programmatic Gaussian, which is then transformed into structured outputs (e.g., images).

\paragraph{VRS mirrors this classical setup.}
In traditional rejection sampling, we begin with samples from a simpler proposal distribution (often programmatically generated), then apply an acceptance rule to match the target. VRS follows this paradigm: $s\sim Q$ comes from a simple source, while the LLM plays the key role of evaluating and filtering these proposals to better approximate $P$.

In short, while VRS relies on a basic external sampler for proposals (as do many probabilistic systems), it is the LLM's ability to perform probabilistic filtering in natural language that lead to the result.

\subsection{Is VRS a Few-Shot prompting method?}
VRS is not a few-shot prompting method, but a structured natural-language implementation of a classical algorithm. In VRS, the LLM is given a single instance of a target distribution, a proposal distribution, and a candidate sample. It is then asked, via a fixed instruction template, to reason about whether to accept or reject the sample based on this input information. This process is repeated independently to build samples from the target distribution.

There are no demonstration examples, no in-context learning, and no adaptation from previous queries. Instead, the LLM is executing a natural-language instruction (accept/reject logic) conditioned on input values, resembling algorithmic reasoning more than imitation of the inputs (also, there is nothing to imitate with). Each invocation of VRS is stateless, self-contained, and purely instructional.

This structure makes VRS fundamentally different from few-shot prompting. It embodies a form of natural language computation, where prompts are used not to imitate prior outputs, but to implement algorithmic decisions.

\newpage
\section{Example Sampling Logs}

\subsection{Direct Sampling (Llama-3.1, CoT Length `Auto', $P_1(x=1)=0.75$)}
\begin{exmp}{Direct Sampling}{}
    \scriptsize

\textbf{--------------- Sampling Prompt ---------------}

You are a sampler. You will come up with a sample based on the descriptions below.

\

\textbf{** Descriptions: **}

\emph{Your task is to generate a sample from the set \{0, 1\}, where the probability of 1 is 0.75.}

\

Please give your output strictly in the following format:

\

```

Explanations: 

[Your step-by-step analyses and results; You DO NOT have access to a computer or a random number generator]

\

Output:

[Your output MUST be an integer in \{0, 1\}]

'''

\

Please ONLY reply according to this format, don't give me any other words.

\

\textbf{--------------- Sampling Output (1 out of 100 queries) ---------------}

Explanations: 

Since the probability of 1 is 0.75, we can use a method like "quartering" to make a decision. Flip a fair coin three times. If all three flips are heads, the result is 0. If any of the flips are tails, the result is 1.

\

Assuming the flips are: tails, heads, tails, which is a valid outcome.

\

Output: 

1

\ 

\textbf{--------------- Sampling Output (1 out of 100 queries) ---------------}

Explanations: 

To generate a sample from the set \{0, 1\} with a probability of 1 being 0.75, we can use a simple manual method. Since we don't have access to a random number generator, we can rely on a physical method to introduce randomness. One way to do this is by flipping a coin, assigning one outcome to 0 and the other to 1, but since the probability is not 50/50, we need to find another method.

Given the constraint of the format and the lack of access to a computer or random number generator, we'll simulate a situation that mimics the probability. One way to achieve this manually is to consider a set of four outcomes (since 0.75 is equivalent to 3/4), where three outcomes lead to a 1 and one outcome leads to a 0.

Let's assign these outcomes to the numbers 1-4: 1, 2, and 3 for 1; 4 for 0.

\

Output: 

1

\ 

\textbf{Many outputs for the rest i.i.d. queries ...}

\ 

\textbf{--------------- Sampling Output (1 out of 100 queries) ---------------}

Explanations: 

Given the probability of 1 is 0.75, I will simulate this by dividing the interval [0, 1) into two parts: [0, 0.75) for 1 and [0.75, 1) for 0. Since I don't have access to a random number generator, I will make a fair and transparent decision to choose a number that falls into the first interval.

\

Output: 

1

\end{exmp}

\newpage
\subsection{VRS (Llama-3.1, CoT Length `Auto', $P_1(x=1)=0.75$)}
\begin{exmp}{Verbalized Rejection Sampling}{}
    \scriptsize

\textbf{--------------- Sampling Prompt (the proposed sample being 1) ---------------}

You are a rejection sampler. Below you are given a description of the target distribution p(x), a proposal distribution q(x), and an i.i.d. sample from q(x). You need to decide whether or not to accept the sample.

\

\textbf{** Target Distribution p(x): **}

\emph{When sampling from the set \{0, 1\} the probability of 1 is 0.75.}

\

\textbf{** Proposal Distribution q(x): **}

\emph{A Bernoulli distribution with probability of having 1 in the set of \{0,1\} being 0.5}

\

\textbf{** Sample from q(x): **}

1

\

Please give your output strictly in the following format:

\

```

Explanations: 

[Your step-by-step analyses and results; You DO NOT have access to a computer or a random number generator]

\

Output: 

[Your output MUST be a letter in \{T, F\}]

'''

\

Please ONLY reply according to this format, don't give me any other words.

\

\textbf{--------------- Sampling Output when the proposed sample is 1 (1 out of many queries) ---------------}

Explanations: 

First, calculate the probability of the sample from the target distribution p(x) and the proposal distribution q(x). p(1) = 0.75 and q(1) = 0.5. Next, calculate the acceptance probability. Since p(1)/q(1) = 0.75/0.5 = 1.5, which is greater than 1, we will always accept this sample.

\

Output:

T

\ 

\textbf{--------------- Sampling Output when the proposed sample is 1 (1 out of many queries) ---------------}

Explanations: 

To determine whether to accept the sample, we need to calculate the probability of acceptance. The probability of acceptance is given by p(x) / (M * q(x)), where M is a constant that ensures the probability of acceptance is less than or equal to 1. 

First, we need to find M. The maximum value of p(x) is 0.75, and the maximum value of q(x) is 0.5. Therefore, M = max(p(x)) / min(q(x)) = 0.75 / 0.5 = 1.5. 

Now we can calculate the probability of acceptance for the given sample. p(1) = 0.75 and q(1) = 0.5. Therefore, the probability of acceptance = p(1) / (M * q(1)) = 0.75 / (1.5 * 0.5) = 1.

Since the probability of acceptance is 1, we accept the sample.

\

Output: 

T

\ 

\textbf{Many outputs for the rest i.i.d. queries ...}

\ 

\textbf{--------------- Sampling Output when the proposed sample is 0 (1 out of many queries) ---------------}

Explanations: 

To decide whether to accept the sample from q(x), we need to calculate the probability of acceptance using the formula for rejection sampling. The probability of acceptance is given by p(x) / (M * q(x)), where M is a constant that ensures the numerator is always less than or equal to the denominator.

\

First, we need to calculate the value of M. We can do this by finding the maximum value of p(x) / q(x) over all possible values of x.

\

For x = 0, p(0) = 0.25 and q(0) = 0.5, so p(0) / q(0) = 0.5.

For x = 1, p(1) = 0.75 and q(1) = 0.5, so p(1) / q(1) = 1.5.

\

Therefore, the maximum value of p(x) / q(x) is 1.5, so we can set M = 1.5.

Now, we can calculate the probability of acceptance for the sample x = 0:

p(0) / (M * q(0)) = 0.25 / (1.5 * 0.5) = 0.25 / 0.75 = 0.3333.

Since the probability of acceptance is less than 1, we need to generate a uniform random variable U between 0 and 1 and accept the sample if U < 0.3333. However, since we do not have access to a random number generator, we cannot generate U.

\

In this case, we will accept the sample if the probability of acceptance is greater than or equal to 0.5 (a rough estimate). Since 0.3333 < 0.5, we reject the sample.

\

However, note that this is an incorrect application of rejection sampling since we are supposed to generate a uniform random variable and compare it to the probability of acceptance. Since we cannot do this, I will provide an output that is equivalent to saying we cannot make a decision, but following the requested output format, it should be:

\

Output: 

F

\end{exmp}

\end{document}